\title{Clustering Signed Networks with the \\Geometric Mean of Laplacians}
  \author{
  Pedro Mercado$^1$, Francesco Tudisco$^2$ and Matthias Hein$^1$ \\
  $^1$Saarland University, Saarbr\"{u}cken, Germany\\
  $^2$University of Padua, Padua, Italy\\
%
}
 \newcommand{\myComment}[1]{#1} \def\nosupplementary#1{}
\def\Pr{\mathrm{P}}
\def\R{\mathbb{R}}
\newcommand{\abs}[1]{\left|#1\right|}
\def\Pr{\mathrm{P}}
\def\vol{\mathop{\textrm vol}\nolimits}
\def\diag{\mathrm{diag}}
\def\one{\mathbf{1}}
\def\pp{p_{\mathrm{in}}^{+}}
\def\qp{p_{\mathrm{out}}^{+}}
\def\ppm{p_{\mathrm{in}}^{-}}
\def\qm{p_{\mathrm{out}}^{-}}
\def\x{\mathbf x} \def\y{\mathbf y}  
\def\u{\mathbf u} \def\v{\mathbf v}  
 \def\e{\mathbf e}
\def\sym{\mathrm{sym}}
\def\bal{\mathrm{bal}}
\def\vol{\mathrm{vol}}
\def\conf{\mathrm{conf}}
\def\fra#1{\textcolor{black}{#1}} 
\def\pedro#1{\textcolor{black}{#1}} 
\def\pedronew#1{\textcolor{black}{#1}}
\newcommand{\arcodot}[3][]{\ar@{.}"#2";"#3"#1}
\newcommand{\arcoD}[3][]{\ar@{-} "#2";"#3"#1}
\newcommand{\arcoO}[3][]{\ar@/^/@{--}"#2";"#3"#1}
\newcommand{\arcoA}[3][]{\ar@/_/@{--}"#2";"#3"#1}
\newcommand{\arcoPlusD}[3][]{\ar@{-}|\oplus "#2";"#3"#1}
\newcommand{\arcoMinusO}[3][]{\ar@/^/@{--}|\ominus"#2";"#3"#1}
\newcommand{\arcoMinusA}[3][]{\ar@/_/@{--}|\ominus"#2";"#3"#1}
\newcommand{\ArcoD}[3][]{\ar@{->}"#2";"#3"#1}
\newcommand{\ArcoO}[3][]{\ar@/^/@{-->}"#2";"#3"#1}
\newcommand{\ArcoA}[3][]{\ar@/_/@{-->}"#2";"#3"#1}
\newcommand{\loopU}[2][]{\ar@(ur,ul)"#2";"#2"#1}
\newcommand{\loopD}[2][]{\ar@(dr,dl)"#2";"#2"#1}
\def\old#1{}
\newtheorem{thm}{Theorem}
\newtheorem{mydef}{Definition}
\newtheorem{corollary}{Corollary}
\begin{document}

\maketitle

\begin{abstract} 

Signed networks allow to model positive  and negative  relationships. 
We analyze existing extensions of spectral clustering to signed networks. 
It turns out that existing approaches do not recover the ground truth clustering in several situations where either the positive or the negative network structures contain no noise. Our analysis shows that these
problems arise as existing approaches take some form of arithmetic mean of the Laplacians of the positive
and negative part. As a solution we propose to use the geometric mean of the Laplacians of positive and
negative part and show that it outperforms the existing approaches. 
While the geometric mean of matrices is computationally expensive, we show that eigenvectors of the geometric mean can be computed efficiently, 
leading to a 
numerical scheme for sparse matrices which is of independent interest.
\end{abstract} 

\section{Introduction}\label{sec:Introduction}
A signed graph is a graph with positive and negative edge weights. 
Typically  positive edges model attractive relationships between objects 
such as similarity or friendship and negative edges model repelling relationships such as dissimilarity or enmity.  
The concept  of balanced signed networks can be traced back to \cite{Harary:1954:Notion, Cartwright:1956:Structural}. Later, in \cite{Davis:1967:Clustering},  a signed graph  is defined as $k$-balanced if there exists a partition into $k$ groups where 
only positive edges are within the groups and negative edges are between the groups. 
Several approaches to find communities in signed graphs have been proposed (see~\cite{tang2015survey} for an overview).
In this paper we focus on extensions of spectral clustering to signed graphs.
Spectral clustering is a well established method for unsigned graphs which, based on the first eigenvectors of the graph Laplacian, embeds nodes of the graphs in $\R^k$  and then uses $k$-means to find the partition. In \cite{Kunegis:2010:spectral} the idea is transferred to signed graphs. 
They define \pedronew{the} signed ratio and normalized cut functions and 
show that the spectrum of suitable signed graph Laplacians \pedronew{yield}  a relaxation of those objectives.
In~\cite{Chiang:2012:Scalable} other objective functions for  signed graphs are introduced. 
They show that a relaxation of their objectives is equivalent to weighted kernel $k$-means by choosing an appropriate kernel. 
While they have a scalable method for clustering, they report that they can not find any cluster structure in real
world signed networks.

We show that the existing extensions of the graph Laplacian to signed graphs used for spectral clustering have
severe deficiencies. Our analysis of the stochastic block model for signed graphs shows that, even for the perfectly balanced 
case, recovery of the ground-truth clusters is not guaranteed. The reason is that the eigenvectors encoding the cluster 
structure do not necessarily correspond to the smallest eigenvalues, thus leading to a noisy embedding of the data points 
and in turn failure of $k$-means to recover the cluster structure. The implicit mathematical reason is that all 
existing extensions of the graph Laplacian are based on some form of arithmetic mean of 
operators of the positive and negative graphs. 
In this paper we suggest as a solution to use the  geometric mean of the Laplacians of 
positive and negative part. In particular, we show that in the stochastic block model the geometric 
mean Laplacian allows in expectation to recover the ground-truth clusters in any reasonable clustering setting. 
A main challenge for our approach is that the geometric mean Laplacian is 
computationally expensive and does not scale to large sparse networks. 
Thus a main contribution of this paper is showing that the first few eigenvectors of the geometric 
mean can still be computed efficiently.  Our algorithm is based on the inverse power method and the 
extended Krylov subspace technique introduced by \cite{druskin:1998:extendedKrylov} and allows to 
compute eigenvectors of the  geometric mean $A\# B$ of two matrices $A,B$ without ever computing $A\# B$ itself. 

In Section \ref{sec:signed_graph_clustering} we discuss existing work on Laplacians on signed graphs. 
In Section \ref{sec:GeoMean} we discuss the geometric mean of two  matrices and introduce the 
geometric mean Laplacian which is the basis of our spectral clustering method for signed graphs. 
In Section \ref{sec:stochastic_block_model} we analyze our and existing approaches for the stochastic block model. 
In Section \ref{sec:IPM} we introduce our efficient algorithm to compute eigenvectors of the geometric mean
of two matrices, and finally in Section \ref{sec:real_world_experiments} we discuss performance
of our approach on real world graphs. \nosupplementary{\textit{Proofs have been moved to the supplementary material}}.

\section{Signed graph clustering}\label{sec:signed_graph_clustering}
Networks encoding positive and negative relations among the nodes \pedronew{can be} represented by weighted signed graphs.
Consider two symmetric non-negative weight matrices $W^+$ and $W^-$, 
a vertex set $V=\{v_1, \dots, v_n\}$, and let  $G^+ =(V,W^+)$ and $G^- = (V,W^-)$ be the induced graphs. 
A signed graph  is the pair $G^\pm = (G^+, G^-)$ where $G^+$ and $G^-$ encode positive and the negative relations, respectively.

The concept of community in signed networks is typically related to the theory of social balance. This theory,  as presented in~\cite{Harary:1954:Notion,Cartwright:1956:Structural},  
is based on the analysis of affective ties, where positive ties are a source of balance whereas negative ties are considered as a source of imbalance in social groups. 
\begin{mydef}[\cite{Davis:1967:Clustering}, $k$-balance]\label{definition:kBalancedSignedGraph}
A signed graph is $k$-balanced if the set of vertices can 
be partitioned into $k$ sets such that within the subsets there are only positive edges,
 and between them only negative. 
\end{mydef}
The presence of $k$-balance in $G^\pm$  implies the presence of $k$ 
groups of nodes  being both assortative in $G^+$ and dissassortative in $G^-$. 
However this situation is fairly rare in  real world networks  and expecting communities in signed networks to 
be a perfectly balanced set of nodes is unrealistic.  

In the next section we will show that Laplacians inspired by Definition~\ref{definition:kBalancedSignedGraph}
are based on some form of arithmetic mean of Laplacians. 
As an alternative we propose the geometric mean of Laplacians and 
show that it is able to recover communities
when either $G^+$ is assortative, or $G^-$ is disassortative, or both.
Results of this paper will make clear that the use of the geometric mean of Laplacians allows to recognize communities where previous approaches fail.

\subsection{Laplacians on Unsigned Graphs}
Spectral clustering of undirected, unsigned graphs
using the Laplacian matrix is a well established technique (see~\cite{Luxburg:2007:tutorial} for an overview). 
Given an unsigned graph $G=(V,W)$, the Laplacian and its normalized version are defined as 
\begin{equation}\label{eq:Laplacian}
  L = D - W \quad\qquad\,\, L_\sym = D^{-1/2}LD^{-1/2}
\end{equation}
where $D_{ii} = \sum_{j=1}^n w_{ij}$ is the diagonal matrix of the degrees of $G$. Both Laplacians are
positive semidefinite, and the multiplicity $k$ of the eigenvalue $0$ is equal to the number of connected components in the graph.
Further, the Laplacian is suitable in assortative cases~\cite{Luxburg:2007:tutorial}, \textit{i.e.}\ for the identification of clusters under the assumption that 
the amount of edges inside clusters {has to be} larger than the amount of edges between {them}. 

For disassortative cases, \textit{i.e.}\ for the identification of clusters where the amount of edges {has to be} larger between clusters
than inside clusters, the signless Laplacian is a better choice \cite{Liu2015}. 
Given the unsigned graph $G=(V,W)$, the signless Laplacian  and its normalized version are defined as
\begin{equation}\label{eq:signlessLaplacian}
  Q = D + W, \quad\qquad\,\, Q_\sym = D^{-1/2}QD^{-1/2}
\end{equation}
Both Laplacians  are positive semi-definite, and the smallest eigenvalue is zero if and only if the graph has a bipartite component~\cite{Desai:1994:characterization}. 
\subsection{Laplacians on Signed Graphs}
Recently a number of Laplacian operators for signed networks have been introduced.  
Consider the signed graph $G^\pm = (G^+,G^-)$. 
Let $D_{ii}^+ = \sum_{j=1}^n w^+_{ij}$ be the diagonal matrix of 
the degrees of $G^+$ and $\bar{D}_{ii} = \sum_{j=1}^n w^+_{ij} + w^-_{ij}$ the one of the overall degrees in $G^\pm$.

The following Laplacians for signed networks have been considered so far
%
\begin{equation}\label{eq:signed_laplacians}
  \begin{aligned}
     L_{BR} &= D^+ - W^+ \pedro{+}W^- ,    \,\, L_{BN} = \bar{D}^{-1} L_{BR},                   \qquad\quad\quad \text{\footnotesize{(balance ratio/normalized Laplacian)}}\\
     L_{SR} &= \bar{D} - W^+ \pedro{+} W^-, \quad L_{SN} = \bar{D}^{-1/2} L_{SR} \bar{D}^{-1/2}, \quad \text{\footnotesize{(signed ratio/normalized Laplacian)}}
  \end{aligned}
\end{equation}
and spectral clustering algorithms have been proposed for $G^\pm$, based on these Laplacians \cite{Kunegis:2010:spectral, Chiang:2012:Scalable}. Let $L^+$ and $Q^-$ be the Laplacian and the signless Laplacian matrices of the graphs $G^+$ and $G^-$, respectively.  
We note that the matrix $L_{SR}$ blends the informations from $G^+$ and $G^-$ into (twice)
the arithmetic mean of $L^+$ and $Q^-$, namely the following identity holds
\begin{equation}\label{eq:L_SR}
 L_{SR} = L^+ + Q^- \, .
\end{equation}
Thus, as an alternative to the normalization defining $L_{SN}$ from $L_{SR}$, it is natural to consider the  arithmetic mean of the normalized Laplacians $L_{AM}= L^{+}_\sym + Q^{-}_\sym$.  
In the next section we introduce the geometric mean of $L_\sym^+$ and $Q_{\sym}^-$ and propose a new clustering algorithm for 
signed graphs based on that matrix. The analysis and experiments of next sections will show that blending the information 
from the positive and negative graphs trough the geometric mean overcomes the deficiencies showed by the arithmetic mean based operators.

\section{Geometric mean of Laplacians}\label{sec:GeoMean}
We define here the geometric mean of matrices and introduce the geometric mean of normalized Laplacians for clustering signed networks.  
Let $A^{1/2}$ be the unique positive definite solution of the matrix equation $X^2 = A$\pedro{, where $A$ is positive definite}.

\begin{mydef}\label{def:geometricMean}
 Let $A,B$ be positive definite matrices.
 The geometric mean of $A$ and $B$ is the positive definite matrix $A\# B$ defined by $ A\# B = A^{1/2} ( A^{-1/2} B A^{-1/2} )^{1/2} A^{1/2}$.
\end{mydef}
One can prove that $A\# B = B\#A$ (see \cite{bhatia2009positive} for details). Further, there are several useful ways to represent the geometric mean of positive definite matrices (see f.i.\ \cite{bhatia2009positive,Ianazzo:2012:geometricMean}) 
\begin{equation}\label{eq:geometricMean:Representations}
 \begin{aligned}
  A\# B = A(A^{-1}B)^{1/2} = (BA^{-1})^{1/2}A  =  B(B^{-1}A)^{1/2} = (AB^{-1})^{1/2}B
\end{aligned}
\end{equation}
The next result reveals  further consistency with the scalar case, in fact we observe that  if  $A$ and $B$ have some eigenvectors in common, 
then $A+B$ and $A\# B$ have those eigenvectors, with eigenvalues given by the arithmetic and geometric mean 
of the corresponding eigenvalues of $A$ and $B$, respectively. 
\begin{thm}\label{thm:AB-mEigenvectors}
Let $\u$ be an eigenvector of $A$ and $B$ with eigenvalues $\lambda$ and $\mu$, respectively. 
Then, $\u$ is an eigenvector of $A+B$ and $A\#B$ with eigenvalue 
$\lambda+\mu$ and $\sqrt{\lambda \mu}$, respectively.
\end{thm}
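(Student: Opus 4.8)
The plan is to verify the two eigenvalue identities directly from the definitions; the additive case is immediate, and the geometric-mean case needs one small auxiliary fact about matrix square roots. For $A+B$ there is nothing to do beyond linearity: $(A+B)\u = A\u + B\u = \lambda\u + \mu\u = (\lambda+\mu)\u$, so $\u$ is an eigenvector of $A+B$ with eigenvalue $\lambda+\mu$.

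For $A\# B$ I would first record the elementary fact that if $M$ is positive definite with $M\v=\nu\v$, $\v\neq0$ (so necessarily $\nu>0$), then $M^{1/2}\v=\sqrt{\nu}\,\v$. This holds because $M^{1/2}$ commutes with $M$, hence leaves the eigenspace $\ker(M-\nu\,\id)$ invariant, and on that subspace $M^{1/2}$ is a positive definite operator squaring to $\nu\,\id$, so by uniqueness of the positive definite square root it must equal $\sqrt{\nu}\,\id$ there (equivalently, $M^{1/2}$ is $M$ under functional calculus, hence shares its eigenvectors). Applying this to $A$ and $B$ gives $A^{\pm1/2}\u=\lambda^{\pm1/2}\u$ and $B^{1/2}\u=\mu^{1/2}\u$. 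Then $A^{-1/2}BA^{-1/2}\u=(\mu/\lambda)\,\u$, so $\u$ is an eigenvector of the positive definite matrix $A^{-1/2}BA^{-1/2}$ with positive eigenvalue $\mu/\lambda$, and the square-root fact again yields $(A^{-1/2}BA^{-1/2})^{1/2}\u=\sqrt{\mu/\lambda}\,\u$. Substituting into $A\# B=A^{1/2}(A^{-1/2}BA^{-1/2})^{1/2}A^{1/2}$ from Definition~\ref{def:geometricMean} and using $A^{1/2}\u=\lambda^{1/2}\u$ twice gives $(A\# B)\u=\lambda^{1/2}\sqrt{\mu/\lambda}\,\lambda^{1/2}\u=\sqrt{\lambda\mu}\,\u$, as claimed.

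The only non-mechanical ingredient is this auxiliary claim that taking the (principal) square root of a positive definite matrix preserves eigenvectors while square-rooting the eigenvalue, so that is the step I would state carefully; everything else is substitution into the definition. One could instead push the same computation through the representation $A\# B=A(A^{-1}B)^{1/2}$ of \eqref{eq:geometricMean:Representations}, but then one must argue that the principal square root of the (generally non-symmetric) matrix $A^{-1}B$ still fixes $\u$, which is marginally less transparent than staying with the symmetric matrices appearing in Definition~\ref{def:geometricMean}.
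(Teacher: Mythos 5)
Your proposal is correct and follows essentially the same route as the paper's own proof: the additive case by linearity, the auxiliary fact that the principal square root of a positive definite matrix preserves eigenvectors while square-rooting the eigenvalues, applied first to deduce $(A^{-1/2}BA^{-1/2})^{1/2}\u = \sqrt{\mu/\lambda}\,\u$ and then substituted into Definition~\ref{def:geometricMean}. The only difference is that you justify the square-root lemma in more detail, whereas the paper simply asserts it.
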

\myComment{
\begin{proof}
Using the identities $A\u=\lambda \u$ and $B\u=\mu \u$ we have $(A+B)\u = (\lambda + \mu )\u$. For the geometric mean, observe that for any positive definite matrix $M$, if $M\x = \lambda(M)\x$, then $M^{1/2}\x = \lambda(M)^{1/2}\x$. In particular we have
\begin{align*}
A^{-1/2} B A^{-1/2}\u = \lambda^{-1/2} A^{-1/2} B \u = \lambda^{-1/2}\mu A^{-1/2}\u = (\mu/\lambda) \u
\end{align*}
thus $(A^{-1/2} B A^{-1/2})^{1/2}\u  = \sqrt{\mu/\lambda}\, \u$. As a consequence
$$(A\#B)\u = A^{1/2} ( A^{-1/2} B A^{-1/2} )^{1/2} A^{1/2}\u = \lambda^{1/2}A^{1/2} ( A^{-1/2} B A^{-1/2} )^{1/2}\u = ( \sqrt{\lambda\mu})\u$$
which concludes the proof.
\end{proof}
}
\subsection{Geometric mean for signed networks clustering}\label{sec:GeoMeanLaplacian} 
Consider the signed network $G^\pm=(G^+,G^-)$. 
We define the normalized geometric mean Laplacian of $G^{\pm}$ as
\begin{equation}\label{eq:geomean_operator}
 L_{GM}=L^+_\sym \# Q^-_\sym\,
\end{equation}
\pedronew{W}e propose Algorithm~\ref{alg:geometricMeanClusteringSignedNetworks} for clustering signed networks, based on the spectrum of $L_{GM}$. 
\pedronew{
By definition \ref{def:geometricMean}, the matrix geometric mean $A\#B$ requires $A$ and $B$ to be positive definite.
As both the Laplacian and the signless Laplacian are positve semi-definte, 
\fra{in what follows we shall assume that the matrices $L^+_\sym$ and $Q^-_\sym$ in \eqref{eq:geomean_operator} are modified by a small diagonal shift, ensuring positive definiteness. That is, in practice, we consider $L^+_\sym + \varepsilon_1 I$ and $Q^-_\sym + \varepsilon_2 I$ being $\varepsilon_1$ and $\varepsilon_2$ small positive numbers. 
For the sake of brevity,  we do not explicitly write the shifting matrices.} 
%
}
\RestyleAlgo{plain}
  \rule{\linewidth}{1pt}
  \begin{minipage}{\textwidth}
 \begin{algorithm}[H]
  \DontPrintSemicolon
	\caption{Spectral clustering with $L_{GM}$ on signed  networks}\label{alg:geometricMeanClusteringSignedNetworks}
	\KwIn{Symmetric weight matrices $W^+, W^-\in\mathbb{R}^{n\times n}$, number $k$ of clusters to construct. }
	\KwOut{Clusters $C_1,\ldots,C_k$.}
	 Compute the $k$ eigenvectors $\u_1, \ldots,\u_k$ corresponding to the $k$ smallest eigenvalues of $L_{GM}$.\;
	 Let $U=(\u_1, \ldots,\u_k)$. \;
	 Cluster the rows of $U$ with $k$-means into clusters $C_1,\ldots,C_k$.	\;
\end{algorithm} 
  \end{minipage}
 \rule{\linewidth}{1pt}
\old{As it is standard in spectral clustering, we consider the normalized operator over the unnormalized version $L^+\#Q^-$.
In fact, extensive comparisons have been done between the two geometric mean Laplacians, showing that the proposed normalized operator $L_{GM}$ is superior. For this reason and for sake of brevity, we only report the behavior and the  analysis of $L_{\sym}^+\# Q_\sym^-$.}

The main bottleneck of  Algorithm \ref{alg:geometricMeanClusteringSignedNetworks} is the computation of the eigenvectors in step 1. 
In Section \ref{sec:IPM} we propose a scalable Krylov-based method to handle this problem.

Let us briefly discuss the motivating intuition behind the proposed clustering strategy. 
Algorithm \ref{alg:geometricMeanClusteringSignedNetworks}, as well as state-of-the-art clustering algorithms based on the matrices 
in \eqref{eq:signed_laplacians}, rely on the $k$ smallest  eigenvalues of the considered operator and their corresponding eigenvectors. 
Thus the relative ordering of the eigenvalues plays a crucial role.  
Assume the eigenvalues to be enumerated in ascending order. 
Theorem \ref{thm:AB-mEigenvectors} 
states that the functions $(A,B)\mapsto A+B$ and $(A, B)$ $\mapsto$ $A\# B$ map eigenvalues of $A$ and $B$ having the same corresponding eigenvectors, 
into the arithmetic mean $\lambda_i(A)+\lambda_j(B)$ and geometric mean
$\sqrt{\lambda_i(A)\lambda_j(B)}$, respectively\pedro{, where $\lambda_i(\cdot)$ is the $i^{th}$ smallest eigenvalue of the corresponding matrix}. Note that the indices $i$ and $j$ are not the same in general, 
as the eigenvectors shared by $A$ and $B$ may be associated to eigenvalues having different  positions in the relative ordering of $A$ and $B$.
\fra{This intuitively suggests } that small eigenvalues of $A+B$ are related to small eigenvalues of both $A$ and $B$, whereas those of $A\#B$ are associated with small eigenvalues of either $A$ or $B$, or both. Therefore the relative ordering of the small eigenvalues of $L_{GM}$ is influenced by the presence of assortative clusters in $G^+$ (related to small eigenvalues of $L_\sym^+$) or by disassortative clusters in $G^-$ (related to small eigenvalues in $Q_\sym^-$), whereas the ordering of the small eigenvalues of the arithmetic mean takes into account only the presence of both those situations.

In the next section, for networks following the stochastic block model, we analyze in expectation the spectrum of the normalized geometric mean Laplacian as well as the 
one of the normalized Laplacians previously introduced.
In this case the expected spectrum can be computed explicitly and we observe that in expectation the 
ordering induced by blending the informations of $G^+$ and $G^-$ trough the geometric mean allows 
to recover the ground truth clusters perfectly, whereas the use of the arithmetic mean introduces
a bias which reverberates into a significantly higher clustering error.

\section{Stochastic block model on signed graphs}\label{sec:stochastic_block_model}
In this section we present an analysis of different signed graph Laplacians based on the Stochastic Block Model (\textbf{SBM}). 
The SBM is a widespread benchmark generative model for networks showing a clustering, community, or group behaviour~\cite{rohe2011spectral}.
Given a prescribed set of groups of nodes, the SBM  defines the presence of an edge as a random variable with probability being dependent on which groups it joins. 
To our knowledge this is the first analysis of spectral clustering on signed graphs with the stochastic block model.
Let $\mathcal{C}_1, \ldots, \mathcal{C}_k$ be ground truth clusters, all having the same size $\abs{\mathcal{C}}$.
We let $\pp$ ($\ppm$) be the probability that there exists a positive (negative) edge between  nodes in the same cluster, and  let  $\qp$ ($\qm$) denote
 the probability of a positive (negative) edge between  nodes in  different clusters.
 
Calligraphic letters denote matrices in expectation.
In particular $\mathcal{W}^+$ and  $\mathcal{W}^-$ denote the weight matrices in expectation. We have 
$\mathcal{W}^+_{i,j}=\pp$ and $\mathcal{W}^-_{i,j}=\ppm$ if $v_i,v_j$ belong to the same cluster, whereas
$\mathcal{W}^+_{i,j}=\qp$ and $\mathcal{W}^-_{i,j}=\qm$ if $v_i,v_j$ belong to different  clusters.
\pedronew{Sorting} nodes according to the ground truth clustering shows that $\mathcal{W}^+$ and $\mathcal{W}^-$ have rank $k$. 

\begin{table}[t!]
\begin{minipage}{\linewidth}
 \rule{\linewidth}{1pt}
\begin{minipage}{.4\linewidth}
\vspace{-11pt}
\begin{tabular}{ll}
 ($\boldsymbol{E_+}$) & $\qp < \pp$ \\
 &  \\[-5pt]
 ($\boldsymbol{E_-}$) & $\ppm < \qm$  \\
 &  \\[-5pt]
 ($\boldsymbol{E_\bal}$) & $\ppm+\qp < \pp+\qm$ 
\end{tabular}
\end{minipage}
\hspace{0.02\linewidth}
\begin{minipage}{0.6\linewidth}
\begin{tabular}{ll}
&  \\[-8pt]
 ($\boldsymbol{E_\vol}$) &$\ppm+(k-1)\qm < \pp+(k-1)\qp$ \\
 &  \\[-4pt]
 $(\boldsymbol{E_\conf})$ & $\left( \frac{k\qp}{\pp+(k-1)\qp} \right) \left( \frac{k\ppm}{\ppm+(k-1)\qm} \right) < 1$ \\
 & \\[-5pt]
 $(\boldsymbol{E_G})$ & $\left( \frac{k\qp}{\pp+(k-1)\qp} \right) \left(1 + \frac{\ppm - \qm}{\ppm + (k-1)\qm} \right) < 1$\\
  & \\[-8pt]
\end{tabular}
\end{minipage}
\rule{\linewidth}{1pt}
\end{minipage}
\caption{
\vspace*{10mm}
\pedro{Conditions for the Stochastic Block Model analysis of Section~\ref{sec:stochastic_block_model}}\\[-40pt]
}
\label{tab:relations}
\end{table}
Consider the relations in Table \ref{tab:relations}.
\old{
Conditions $E_+$ and $E_-$ describe the presence of assortative or disassortative clusters in expectation. 
Note that, by Definition~\ref{definition:kBalancedSignedGraph}, a graph is balanced if and only if $\qp=\ppm=0$. 
%
\pedronew{We can see that if $E_+\cap E_-$ then $G^-$ and $G^+$ give information about the cluster structure.}
Similarly $E_\conf$ characterizes a graph where the relative amount of conflicts - \textit{i.e.}\ positive edges between 
the clusters and negative edges inside the clusters - is small. 
Condition $E_G$ is strictly related to such setting. 
In fact when $E_G$ holds and the negative graph satisfies $E_-$, then $E_\conf$ holds as well. On the other hand, 
condition $E_\bal$ is related to the balanced setting $E_{+}\cap E_-$, however it is just a necessary condition for 
the graph to be $k$-balanced in expectation, that is $E_+\cap E_-$ implies $E_\bal$, whereas the opposite is not 
necessarily true. Finally condition $E_\vol$ implies that the expected volume in the negative graph is smaller 
than the expected volume in the positive one. This condition  is therefore not related to any signed clustering structure.  
}
\pedro{Conditions $E_+$ and $E_-$ describe the presence of assortative or disassortative clusters in expectation. 
Note that, by Definition~\ref{definition:kBalancedSignedGraph}, a graph is balanced if and only if $\qp=\ppm=0$. 
\pedronew{We can see that if $E_+\cap E_-$ then $G^-$ and $G^+$ give information about the cluster structure.}
Further, if $E_+\cap E_-$ holds then $E_\bal$ holds. 
Similarly $E_\conf$ characterizes a graph where the relative amount of conflicts - \textit{i.e.}\ positive edges between 
the clusters and negative edges inside the clusters - is small. Condition $E_G$ is strictly related to such setting. 
In fact when $E_-\cap E_G$ holds then $E_\conf$ holds.
Finally condition $E_\vol$ implies that the expected volume in the negative graph is smaller 
than the expected volume in the positive one. This condition  is therefore not related to any signed clustering structure.}

Let 
$$\boldsymbol \chi_1  = \one, \qquad \boldsymbol \chi_i = (k-1)\one_{\mathcal{C}_i}-\one_{\overline{\mathcal{ C}_i}} \, .$$
The use of $k$-means on $\boldsymbol \chi_i$, $i=1,\dots,k$  identifies the ground truth communities $\mathcal C_i$. 
As spectral clustering relies on the eigenvectors corresponding to the $k$  smallest eigenvalues (see Algorithm~\ref{alg:geometricMeanClusteringSignedNetworks})
we derive here necessary and sufficient conditions such that  in expectation the eigenvectors $ \boldsymbol \chi_i, i=1,\ldots,k$ correspond
to the $k$ smallest eigenvalues of the normalized  Laplacians introduced so far. 
In particular, we observe that condition $E_G$ affects the ordering of the eigenvalues of 
the normalized geometric mean Laplacian. Instead, the ordering of the eigenvalues of the  
operators based on the arithmetic mean is related to $E_\bal$ and $E_\vol$. 
The latter is not related to any clustering, thus 
introduces a bias in the eigenvalues ordering which reverberates into a noisy embedding 
of the data points and in turn into a significantly higher clustering error. 

\begin{thm}\label{thm:signedLaplacians}
 Let $\mathcal L_{BN}$ and $\mathcal L_{SN}$ be  the normalized Laplacians  defined in \eqref{eq:signed_laplacians} of the expected graphs. The following statements are equivalent:
 \begin{enumerate}[topsep=-3pt]\setlength\itemsep{-3pt}
   \item $\boldsymbol \chi_1, \ldots, \boldsymbol \chi_k$ are the eigenvectors corresponding to the $k$ smallest eigenvalues of $\mathcal{L}_{BN}$.
   \item $\boldsymbol \chi_1, \ldots, \boldsymbol \chi_k$ are the eigenvectors corresponding to the $k$ smallest eigenvalues of $\mathcal{L}_{SN}$.
   \item The two conditions $E_\bal$ and $E_\vol$  hold simultaneously.
 \end{enumerate}
 \end{thm}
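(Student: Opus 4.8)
The plan is to compute the expected normalized Laplacians $\mathcal{L}_{BN}$ and $\mathcal{L}_{SN}$ explicitly in the basis adapted to the ground-truth partition, and then read off exactly when the vectors $\boldsymbol\chi_1,\dots,\boldsymbol\chi_k$ sit at the bottom of the spectrum. First I would observe that, after sorting nodes by cluster, $\mathcal{W}^+$ and $\mathcal{W}^-$ are block-constant matrices of rank $k$, so the associated expected degree matrices are scalar multiples of the identity: $\mathcal{D}^+ = d^+ I$ with $d^+ = \abs{\mathcal{C}}(\pp + (k-1)\qp)$, and similarly $\bar{\mathcal D} = \bar d\, I$ with $\bar d = \abs{\mathcal{C}}\big((\pp+\ppm) + (k-1)(\qp+\qm)\big)$. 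Hence each normalized Laplacian in \eqref{eq:signed_laplacians} is, up to the known scalar $1/\bar d$ (for $\mathcal L_{BN}$) or $1/\bar d$ sandwiched symmetrically (for $\mathcal L_{SN}$), equal to $\bar{\mathcal D} - \mathcal W^+ + \mathcal W^-$ acting on a $k$-dimensional nontrivial eigenspace plus a flat $\bar d\cdot$(stuff) on the orthogonal complement. The key point is that $\boldsymbol\chi_1=\one$ and the $\boldsymbol\chi_i$ span exactly the column space of the block structure, so they are automatically eigenvectors of every expected operator here; the content is purely about the \emph{ordering} of the corresponding eigenvalues versus the eigenvalue on the complementary subspace.

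Next I would diagonalize on the two invariant subspaces. On $\Span\{\one\}$ the eigenvalue of $\mathcal L_{SR}:=\bar{\mathcal D}-\mathcal W^++\mathcal W^-$ is $\bar d - \abs{\mathcal C}\big((\pp+(k-1)\qp) - (\ppm+(k-1)\qm)\big)$; on the "between-cluster" subspace $\Span\{\boldsymbol\chi_2,\dots,\boldsymbol\chi_k\}$ it is $\bar d - \abs{\mathcal C}(\pp - \ppm) + \abs{\mathcal C}(\qp - \qm)$ — a short computation using that $\boldsymbol\chi_i$ has constant value on each cluster and sums to zero; and on the remaining $n-k$ dimensional "within-cluster, mean-zero" subspace it is simply $\bar d - \abs{\mathcal C}(\pp - \ppm)$ because there $\mathcal W^\pm$ acts by the within-cluster constant minus the between-cluster constant times a projection that kills these vectors — I should do this block computation carefully, it is the only place where an index slip could occur. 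Normalizing by $\bar d$ (and recalling $\one$ always gives the smallest eigenvalue, namely $0$ for $L_{BR}$), the requirement that $\boldsymbol\chi_1,\dots,\boldsymbol\chi_k$ be the bottom $k$ eigenvectors becomes the pair of inequalities: the $\boldsymbol\chi_{\ge 2}$ eigenvalue must be $\le$ the within-cluster eigenvalue, which rearranges to $\qp - \qm < 0$... wait, one must be careful: comparing $-\abs{\mathcal C}(\pp-\ppm)+\abs{\mathcal C}(\qp-\qm)$ against $-\abs{\mathcal C}(\pp-\ppm)$ gives $\qp < \qm$, while ensuring the whole cluster block sits below the complementary block after dividing by the \emph{same} $\bar d$ reduces the two constraints to $E_\bal$ (equivalently $\pp+\qm > \ppm+\qp$, handling the $\boldsymbol\chi_{\ge2}$ vs complement comparison) and $E_\vol$ (handling the comparison involving $\boldsymbol\chi_1$'s eigenvalue, where the $(k-1)$ coefficients appear). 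I would then verify that $\mathcal L_{BN}$ and $\mathcal L_{SN}$ produce the \emph{same} two inequalities: since both share the same numerator $\mathcal L_{SR}$ and differ only in whether the $\bar{\mathcal D}$-normalization is one-sided or symmetric, and since $\bar{\mathcal D}$ is a scalar matrix in expectation, the normalizations coincide up to a global positive scalar and thus give identical orderings — this is what makes (1) $\Leftrightarrow$ (2) immediate.

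Finally I would assemble: (1) $\Leftrightarrow$ (3) and (2) $\Leftrightarrow$ (3) via the inequality derivation above, and (1) $\Leftrightarrow$ (2) as just noted, closing the cycle. I expect the main obstacle to be the block eigenvalue computation — specifically getting the eigenvalue on the $\Span\{\boldsymbol\chi_2,\dots,\boldsymbol\chi_k\}$ subspace right and correctly matching "$\boldsymbol\chi_1$ gives the smallest, then the $\boldsymbol\chi_{\ge 2}$ block, then the complement" to the two named conditions, being careful that $E_\bal$ and $E_\vol$ are genuinely \emph{both} needed (one controls the $\boldsymbol\chi_{\ge2}$-vs-complement gap, the other the $\boldsymbol\chi_1$-vs-everything gap because the all-ones vector sees the $(k-1)\qp$ and $(k-1)\qm$ terms that the zero-sum vectors do not). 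A minor subtlety worth flagging is the strictness of the inequalities: to have $\boldsymbol\chi_1,\dots,\boldsymbol\chi_k$ be \emph{the} (uniquely determined) bottom eigenvectors one needs strict inequality, which is exactly how $E_\bal$ and $E_\vol$ are stated in Table~\ref{tab:relations}.
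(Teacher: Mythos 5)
Your overall strategy is the same as the paper's: exploit the fact that in expectation the degree matrices are scalar multiples of $I$, diagonalize $\mathcal W^+$ and $\mathcal W^-$ simultaneously on the invariant subspaces $\Span\{\one\}$, $\Span\{\boldsymbol\chi_2,\dots,\boldsymbol\chi_k\}$ and their complement, compare the resulting eigenvalues, and note that the various normalizations only rescale or shift the spectrum so that $(1)\Leftrightarrow(2)$ is immediate. However, the execution breaks down at exactly the step you flagged as delicate. The matrices $\mathcal W^\pm$ have rank $k$ and \emph{annihilate} the $(n-k)$-dimensional ``within-cluster, mean-zero'' subspace, so the eigenvalue of $\bar{\mathcal D}-\mathcal W^++\mathcal W^-$ there is simply $\bar d$, not $\bar d-\abs{\mathcal C}(\pp-\ppm)$ as you claim. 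With your (incorrect) value the comparison of the $\boldsymbol\chi_{\ge 2}$-eigenvalue against the complement collapses to $\qp<\qm$, which is not $E_\bal$; you then assert, without derivation, that the constraints ``reduce to'' $E_\bal$ and $E_\vol$. With the correct value $\bar d$ the two comparisons come out directly: $\bar d-\abs{\mathcal C}(\pp-\qp)+\abs{\mathcal C}(\ppm-\qm)<\bar d$ is exactly $E_\bal$, and $\bar d-d^++d^-<\bar d$ is exactly $E_\vol$. So the correct conditions are currently asserted rather than proved.

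A second, related error: your parenthetical that $\one$ ``always gives the smallest eigenvalue, namely $0$ for $L_{BR}$'' is false. One has $\mathcal L_{BR}\one=(d^+-d^++d^-)\one=d^-\one$, and whether this lies below the flat eigenvalue $d^+$ on the complement is precisely the content of $E_\vol$; if $\one$ were automatically smallest, $E_\vol$ would be vacuous and the theorem would not require it. (You in fact contradict this parenthetical later when you correctly attribute the $\boldsymbol\chi_1$ comparison to $E_\vol$.) Once the complement eigenvalue is fixed to $\bar d$ (respectively $d^+$ for $\mathcal L_{BR}$) and the $\one$-eigenvalue to $2d^-$ (respectively $d^-$), the rest of your argument — including the observation that $\mathcal L_{BR}$ and $\mathcal L_{SR}$ differ in expectation by the scalar shift $d^-I$ and that the one-sided and symmetric normalizations by the scalar matrix $\bar{\mathcal D}$ preserve the ordering — goes through and coincides with the paper's proof.
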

\myComment{
\begin{proof}
We first prove that $\boldsymbol \chi_1, \ldots, \boldsymbol \chi_k$ are the eigenvectors corresponding to the $k$ smallest eigenvalues 
of $\mathcal{L}_{BN}$ if and only if the two conditions $E_\bal$ and \old{$E_vol$} \pedro{$E_\vol$} hold simultaneously. 
It is simple to verify that $\boldsymbol \chi_i$ are eigenvectors of $\mathcal{W}^+$ and $\mathcal{W}^-$, 
with eigenvalues denoted by $\lambda^+_i$ and  $\lambda^-_i$, respectively. 
Thus $\mathcal{W}^+$ and $\mathcal{W}^-$ are simultaneously diagonalizable, that is there exists a 
non-singular matrix $\Sigma$ such that $\Sigma^{-1}\mathcal{W}^{\pm}\Sigma =\Lambda^\pm$, 
where $\Lambda^+$ and $\Lambda^-$ are diagonal matrices  $\Lambda^\pm = \diag(\lambda_1^{\pm}, \dots, \lambda_k^\pm, 0, \dots, 0)$.
Observe that the eigenvalues $\lambda_i^+$ and $\lambda_i^-$ admits the following explicit representations
\begin{equation}\label{eq:adjacencyMatrix_eigenvalues_eigenvectors}
\begin{aligned}
   & \lambda^+_1 = \abs{\mathcal{C}}(\pp+(k-1)\qp), \qquad \lambda^-_1 = \abs{\mathcal{C}}(\ppm+(k-1)\qm)\\
   & \lambda^+_i = \abs{\mathcal{C}}(\pp-\qp) \qquad\qquad\quad\,\,\, \lambda^-_i = \abs{\mathcal{C}}(\ppm-\qm),
\end{aligned}
\end{equation}
for $i=2, \dots, k$. As we assume clusters of the same size, the nodes have the same degree in expectation, inducing a regular graph. Hence the expected degrees 
of the graph are $d^+ = \mathcal W^+\one \pedro{= \abs{\mathcal{C}}(\pp+(k-1)\qp)\one}$, 
$d^-=\mathcal W^-\one = \pedro{\abs{\mathcal{C}}(\ppm+(k-1)\qm)\one}$ 
and 
$\bar d = d^+ + d^-$. 
With corresponding degree matrices $\mathcal{D}^+ = d^+ I$ and $\bar{\mathcal{D}}= \bar d I$. The expected balanced-ratio cut Laplacian operator 
is thus given by $\mathcal{L}_{BR} = \Sigma( d^+I - \Lambda^+ + \Lambda^- )\Sigma^{-1}$. It follows that the eigenvalues of $\mathcal{L}_{BR}$ correspond to eigenvectors in the following way
\begin{equation*}
\begin{cases}
 d^+ - \lambda_{i}^{+} + \lambda_{i}^{-}& \text{with eigenvector }\boldsymbol\chi_i, i=1,\dots, k\\
 d^+ & \text{corresponding to the remaining eigenvectors}
\end{cases}
\end{equation*}
Thus, eigenvectors $\boldsymbol\chi_i, i=1,\dots, k$ correspond to the smallest eigenvalues if and only if
\begin{equation*}
 d^+ - \lambda_{i}^{+} + \lambda_{i}^{-} < d^+ \iff \lambda_{i}^{-} < \lambda_{i}^{+}
\end{equation*}

By Eqs.~\eqref{eq:adjacencyMatrix_eigenvalues_eigenvectors} we see that for the constant eigenvector we have
\begin{equation*}
 \begin{aligned}
   \lambda_{1}^{-} < \lambda_{1}^{+} \iff d^{-} < d^{+}\, \pedro{\iff  \ppm+(k-1)\qm < \pp+(k-1)\qp\,}, 
 \end{aligned}
\end{equation*}
whereas for the  eigenvectors $\boldsymbol\chi_i, i=2,\dots, k$ the corresponding condition is
\begin{equation*}
 \begin{aligned}
   \lambda_{i}^{-} < \lambda_{i}^{+} \iff \ppm+\qp < \pp+\qm\, .
 \end{aligned}
\end{equation*}
We deduce that the  eigenvectors $\boldsymbol\chi_i, i=1,\dots, k$ correspond to the smallest eigenvalues of $\mathcal{L}_{BR}$ if and only if 
\old{$d^-<d^+$}\pedro{$\ppm+(k-1)\qm < \pp+(k-1)\qp$}  and $\ppm+\qp < \pp+\qm$. 

As $\mathcal{L}_{BN}$ differs from $\mathcal{L}_{BR}$ by a constant factor, the conditions hold for $\mathcal{L}_{BN}$.
Conditions for $\mathcal{L}_{SN}$ can be proved in the same way, as the only difference in the eigenvalues is a shift given by the degree vector $\bar{d}$.
 \end{proof}
}
\begin{thm}\label{thm:geometricMeanLaplacian}
Let $\mathcal L_{GM}=\mathcal{L}_{\sym}^+ \# \mathcal{Q}_\sym^-$ be the geometric mean of the Laplacians of the expected graphs. Then $\boldsymbol \chi_1, \ldots, \boldsymbol \chi_k$ are the eigenvectors corresponding to the $k$ smallest eigenvalues of $\mathcal L_{GM}$ if and only if condition $E_G$ holds. 
\end{thm}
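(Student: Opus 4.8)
The plan is to diagonalize $\mathcal{L}_{GM}$ explicitly by reusing the spectral data already computed in the proof of Theorem~\ref{thm:signedLaplacians}, and then read off the ordering of the eigenvalues. First I would note that since all clusters have equal size the expected graph is regular, so the expected degree matrices are scalar: $\mathcal{D}^+ = d^+ I$ with $d^+ = \abs{\mathcal{C}}(\pp+(k-1)\qp)$ and $\mathcal{D}^- = d^- I$ with $d^- = \abs{\mathcal{C}}(\ppm+(k-1)\qm)$. Consequently $\mathcal{L}_\sym^+ = I - (d^+)^{-1}\mathcal{W}^+$ and $\mathcal{Q}_\sym^- = I + (d^-)^{-1}\mathcal{W}^-$. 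The vectors $\boldsymbol\chi_1,\dots,\boldsymbol\chi_k$ are common eigenvectors of $\mathcal{W}^+$ and $\mathcal{W}^-$ and, since both expected weight matrices have the same $(n-k)$-dimensional kernel, fixing an orthonormal basis of that kernel yields a common eigenbasis of $\mathcal{L}_\sym^+$ and $\mathcal{Q}_\sym^-$. Substituting the eigenvalues from \eqref{eq:adjacencyMatrix_eigenvalues_eigenvectors} I would record that $\mathcal{L}_\sym^+$ has eigenvalue $0$ on $\boldsymbol\chi_1$, eigenvalue $\alpha:=\tfrac{k\qp}{\pp+(k-1)\qp}$ on each $\boldsymbol\chi_i$ with $i\ge 2$, and eigenvalue $1$ on each kernel direction; while $\mathcal{Q}_\sym^-$ has eigenvalue $2$ on $\boldsymbol\chi_1$, eigenvalue $\beta:=1+\tfrac{\ppm-\qm}{\ppm+(k-1)\qm}$ on each $\boldsymbol\chi_i$ with $i\ge 2$, and eigenvalue $1$ on each kernel direction.

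Next I would apply Theorem~\ref{thm:AB-mEigenvectors}: as each of these basis vectors is a shared eigenvector of $\mathcal{L}_\sym^+$ and $\mathcal{Q}_\sym^-$, it is an eigenvector of $\mathcal{L}_{GM}=\mathcal{L}_\sym^+\#\mathcal{Q}_\sym^-$ with eigenvalue the geometric mean of the two corresponding eigenvalues. Hence the spectrum of $\mathcal{L}_{GM}$ is $\sqrt{0\cdot 2}=0$ on $\boldsymbol\chi_1$, then $\sqrt{\alpha\beta}$ on each of $\boldsymbol\chi_2,\dots,\boldsymbol\chi_k$, and $\sqrt{1\cdot 1}=1$ on the remaining $n-k$ directions. (As in Section~\ref{sec:GeoMeanLaplacian}, $\#$ is really applied to $\mathcal{L}_\sym^+ + \varepsilon_1 I$ and $\mathcal{Q}_\sym^- + \varepsilon_2 I$; since every inequality used below is strict, the ordering of the eigenvalues is stable for $\varepsilon_1,\varepsilon_2$ small, equivalently one lets $\varepsilon_1,\varepsilon_2\to 0$.) Because $\alpha\ge 0$ and $\beta\ge 0$, the eigenvalue $0$ of $\boldsymbol\chi_1$ is smallest, and $\boldsymbol\chi_1,\dots,\boldsymbol\chi_k$ are exactly the eigenvectors of the $k$ smallest eigenvalues if and only if $\sqrt{\alpha\beta}<1$. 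Squaring, this reads $\big(\tfrac{k\qp}{\pp+(k-1)\qp}\big)\big(1+\tfrac{\ppm-\qm}{\ppm+(k-1)\qm}\big)<1$, which is precisely condition $E_G$, establishing the equivalence.

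The computation is otherwise routine; the point I would be careful about is the bookkeeping in Theorem~\ref{thm:AB-mEigenvectors}. The position of the eigenvalue carried by $\boldsymbol\chi_i$ within the spectrum of $\mathcal{L}_\sym^+$ need not coincide with the position of the eigenvalue it carries within the spectrum of $\mathcal{Q}_\sym^-$ --- this is exactly the mismatch the paper is exploiting --- so the geometric-mean eigenvalues must be formed per shared eigenvector rather than per rank. The structural feature that makes the statement clean is that the $n-k$ ``noise'' directions carry eigenvalue exactly $1$ in \emph{both} $\mathcal{L}_\sym^+$ and $\mathcal{Q}_\sym^-$, hence exactly $1$ in $\mathcal{L}_{GM}$; this fixed threshold $1$ is what $\sqrt{\alpha\beta}$ must beat, and it collapses the whole separation requirement into the single inequality $E_G$. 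A minor secondary point is justifying the suppression of the diagonal shifts, which follows from continuity of eigenvalues together with the strictness of $E_G$.
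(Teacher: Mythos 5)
Your proof is correct and follows essentially the same route as the paper's: both diagonalize $\mathcal{L}_{GM}$ on the common eigenbasis of $\mathcal{L}_\sym^+$ and $\mathcal{Q}_\sym^-$ via Theorem~\ref{thm:AB-mEigenvectors}, obtain the eigenvalues $0$, $\sqrt{\alpha\beta}$ (on $\boldsymbol\chi_2,\dots,\boldsymbol\chi_k$) and $1$ (on the $n-k$ remaining directions), and reduce the ordering question to $\alpha\beta<1$, i.e.\ $E_G$. Your explicit treatment of the diagonal shifts $\varepsilon_1,\varepsilon_2$ is a welcome extra; the paper defers that point to a separate result on shifted Laplacians rather than folding it into this proof.
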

\myComment{
\begin{proof}
 We use the same notation as in the \pedronew{proof of Theorem~\ref{thm:signedLaplacians}}. Observing that $\mathcal{L}_{\sym}^+$ and $\mathcal{Q}_\sym^-$ have the same eigenvectors, it follows from Theorem  \ref{thm:AB-mEigenvectors} that
\begin{equation}\label{eq:geometricMean_in_expectation}
 \mathcal L_{GM}= \Sigma\, \sqrt{( I - \widehat{\Lambda}^+ )(I + \widehat{\Lambda}^- )}\, \Sigma^{-1}
\end{equation}
where ${d^+}\widehat{\Lambda}^+ =  \Lambda^+$, and ${d^-}\widehat{\Lambda}^- = \Lambda^-$. We deduce  that the eigenvalues of $\mathcal L_{GM}$ correspond to eigenvectors in the following way
 \begin{equation*}
\begin{cases}
  \sqrt{\Big(1 - \frac{\lambda_{i}^{+}}{d^{+}} \Big) \Big(1+\frac{\lambda_{i}^{-}}{d^{-}} \Big)}   & \text{with eigenvector }\boldsymbol\chi_i, i=1,\dots, k\\
 1 & \text{corresponding to the remaining eigenvectors}
\end{cases}
\end{equation*}
Thus, eigenvectors $\boldsymbol\chi_i, i=1,\dots, k$ correspond to the smallest eigenvalues if and only if 
\begin{equation*}
 \Big(1 - \frac{\lambda_{i}^{+}}{d^{+}} \Big) \Big(1+\frac{\lambda_{i}^{-}}{d^{-}} \Big) < 1 
\end{equation*}
By eqs.~\eqref{eq:adjacencyMatrix_eigenvalues_eigenvectors} we see that for the constant eigenvector $\boldsymbol\chi_1$ we have
\begin{equation*}
 \begin{aligned}
   \Big(1 - \frac{\lambda_{1}^{+}}{d^{+}} \Big) \Big(1+\frac{\lambda_{1}^{-}}{d^{-}} \Big) = \Big(1 - \frac{d^{+}}{d^{+}} \Big) \Big(1+\frac{d^{-}}{d^{-}} \Big) = 0 < 1\, .
 \end{aligned}
\end{equation*}
For eigenvectors $\boldsymbol\chi_i, i=2,\dots, k$ first observe that
\begin{equation*}
\begin{aligned}
  1 - \frac{\lambda_{i}^{+}}{d^{+}} = (d^{+} - \lambda_{i}^{+})/d^{+} &= \Big(d^{+} - \abs{\mathcal{C}}(\pp-\qp)\Big)/d^{+} = \frac{k \qp}{\pp+(k-1)\qp} 
\end{aligned}
\end{equation*}
In the same way we have 
\begin{equation*}
\begin{aligned}
  1 + \frac{\lambda_{i}^{-}}{d^{-}} = 1 + \frac{\ppm - \qm}{\ppm + (k-1)\qm}\, . 
\end{aligned}
\end{equation*}


Thus, for the eigenvectors $\boldsymbol\chi_i, i=2,\dots, k$ we have the following condition
\begin{equation*}
 \begin{aligned}
   \Big(1 - \frac{\lambda_{i}^{+}}{d^{+}} \Big) \Big(1+\frac{\lambda_{i}^{-}}{d^{-}} \Big) < 1 \iff \Big( \frac{k\qp}{\pp+(k-1)\qp} \Big) \Big(1 + \frac{\ppm - \qm}{\ppm + (k-1)\qm} \Big) < 1\, ,
 \end{aligned}
\end{equation*}
which implies in turn that the  eigenvectors $\boldsymbol\chi_i, i=1,\dots, k$ correspond to the smallest eigenvalues of $\mathcal L_{GM}$ if and only if $E_G$ holds.
\end{proof}
}
\nosupplementary{Conditions for the geometric mean Laplacian of diagonally shifted Laplacians are available in the supplementary material.}
\myComment{
\pedronew{
\fra{As mentioned above, in practical implementations one modifies the Laplacians defining $L_{GM}$ by adding a small diagonal shift. This is done to ensure positive definiteness of the matrices. } The next theorem shows \fra{how to extend} the previous result \fra{to} the case of diagonall\fra{y} shifted Laplacians.
\newpage
\begin{thm}\label{thm:geometricMeanLaplacianDSHIFTED}
Let $\mathcal L_{GM}=(\mathcal{L}_{\sym}^+ +\varepsilon_1 I)\# (\mathcal{Q}_\sym^- +\varepsilon_2 I)$ be the geometric mean of the shifted Laplacians of the expected graphs. Then $\boldsymbol \chi_1, \ldots, \boldsymbol \chi_k$ are the eigenvectors corresponding to the $k$ smallest eigenvalues of $\mathcal L_{GM}$ if the following conditions hold.
 \begin{enumerate}[topsep=-3pt]\setlength\itemsep{-3pt}
   \item $\varepsilon_1 + \varepsilon_2 < 1$.
   \item $\left( \frac{k\qp}{\pp+(k-1)\qp} \right) \left(1 + \frac{\ppm - \qm}{\ppm + (k-1)\qm} \right) + \left( \varepsilon_1 + \varepsilon_2 \right)< 1$.
 \end{enumerate}
\end{thm}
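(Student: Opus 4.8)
The plan is to reduce the shifted statement to an application of Theorem~\ref{thm:AB-mEigenvectors} together with the explicit eigenvalue formulas already derived in the proof of Theorem~\ref{thm:geometricMeanLaplacian}. First I would note that the shifted matrices $\mathcal{L}_\sym^+ + \varepsilon_1 I$ and $\mathcal{Q}_\sym^- + \varepsilon_2 I$ still share the common eigenbasis $\Sigma$ with eigenvalues $\widehat\lambda_i^+ + \varepsilon_1$ and $\widehat\lambda_i^- + \varepsilon_2$, where I write $\widehat\lambda_i^\pm$ for the (normalized) eigenvalues $\lambda_i^\pm/d^\pm$ appearing in the unshifted proof. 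By Theorem~\ref{thm:AB-mEigenvectors}, the geometric mean $\mathcal{L}_{GM}$ is then $\Sigma\,\diag\!\big(\sqrt{(\widehat\lambda_i^+ + \varepsilon_1)(\widehat\lambda_i^- + \varepsilon_2)}\,\big)\Sigma^{-1}$, so the eigenvalue attached to $\boldsymbol\chi_i$ is $\sqrt{(\widehat\lambda_i^+ + \varepsilon_1)(\widehat\lambda_i^- + \varepsilon_2)}$ and the eigenvalue attached to every other eigenvector is $\sqrt{(1+\varepsilon_1)(1+\varepsilon_2)}$ (since the remaining eigenvectors of $\mathcal{L}_\sym^+$ and $\mathcal{Q}_\sym^-$ have eigenvalue $1$).

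Next I would impose the ordering requirement: $\boldsymbol\chi_1,\dots,\boldsymbol\chi_k$ are the bottom $k$ eigenvectors precisely when
\begin{equation*}
 (\widehat\lambda_i^+ + \varepsilon_1)(\widehat\lambda_i^- + \varepsilon_2) < (1+\varepsilon_1)(1+\varepsilon_2), \qquad i=1,\dots,k.
\end{equation*}
For $i=1$ one has $\widehat\lambda_1^+ = \widehat\lambda_1^- = 1$ by the computation $\lambda_1^\pm = d^\pm$ from the previous proof, so the left side is exactly $(1+\varepsilon_1)(1+\varepsilon_2)$ and the inequality fails at equality; this is why the statement gives only a sufficient condition rather than an equivalence, and I would remark that the constant vector $\boldsymbol\chi_1=\one$ is still recovered because it lies in the kernel-type direction and the strict inequality is needed only to separate $\boldsymbol\chi_i$ ($i\ge2$) from the bulk — more carefully, one argues that $\boldsymbol\chi_1$ ties with the bulk eigenvalue but, being the Perron-type direction common to both operators, can still be taken as one of the $k$ chosen eigenvectors, so it suffices to require the strict inequality for $i=2,\dots,k$. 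For those indices, substituting the identities $1-\widehat\lambda_i^+ = k\qp/(\pp+(k-1)\qp)$ and $1+\widehat\lambda_i^- = 1 + (\ppm-\qm)/(\ppm+(k-1)\qm)$ from the unshifted proof, the condition becomes
\begin{equation*}
 \Big(1 - \tfrac{k\qp}{\pp+(k-1)\qp} + \varepsilon_1\Big)\Big(1 + \tfrac{\ppm-\qm}{\ppm+(k-1)\qm} + \varepsilon_2\Big) < (1+\varepsilon_1)(1+\varepsilon_2).
\end{equation*}

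Finally I would expand both sides and show that this inequality is implied by the two hypotheses of the theorem. Writing $a = k\qp/(\pp+(k-1)\qp)\in(0,1)$ and $b = (\ppm-\qm)/(\ppm+(k-1)\qm)$, the left side is $(1-a+\varepsilon_1)(1+b+\varepsilon_2)$ and expanding gives $(1+\varepsilon_1)(1+\varepsilon_2) + b(1+\varepsilon_1) - a(1+b+\varepsilon_2)$, so the claim is equivalent to $b(1+\varepsilon_1) < a(1+b+\varepsilon_2)$, i.e. $b + b\varepsilon_1 - b\varepsilon_2 < a + ab$. A short estimate bounding $b\varepsilon_1 - b\varepsilon_2 \le |b|(\varepsilon_1+\varepsilon_2) \le \varepsilon_1+\varepsilon_2$ (using $|b|\le 1$, which follows from $\ppm,\qm\ge 0$ and $k\ge 2$) reduces everything to $a(1+b) + (\varepsilon_1+\varepsilon_2) > b + (\varepsilon_1+\varepsilon_2)$ being guaranteed by hypothesis~2, namely $a(1+b) + (\varepsilon_1+\varepsilon_2) < 1$ combined with $b < 1$ from hypothesis~1; I would then carry out this bookkeeping carefully to match exactly the stated conditions. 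The main obstacle I anticipate is precisely this last algebraic matching — getting the sign of $b$ and the interplay of $\varepsilon_1,\varepsilon_2$ right so that the two clean hypotheses in the statement are exactly what is needed, and handling the $i=1$ tie cleanly rather than sloppily.
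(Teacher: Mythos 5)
Your overall skeleton matches the paper's proof (common eigenbasis, Theorem~\ref{thm:AB-mEigenvectors}, compare the $\boldsymbol\chi_i$-eigenvalues to the bulk value $\sqrt{(1+\varepsilon_1)(1+\varepsilon_2)}$), but there is a genuine error at the core: you conflate the normalized adjacency eigenvalues $\widehat\lambda_i^{\pm}=\lambda_i^{\pm}/d^{\pm}$ with the eigenvalues of $\mathcal{L}_\sym^+$ and $\mathcal{Q}_\sym^-$. The eigenvalue of $\mathcal{L}_\sym^++\varepsilon_1 I$ on $\boldsymbol\chi_i$ is $1-\widehat\lambda_i^++\varepsilon_1$ and that of $\mathcal{Q}_\sym^-+\varepsilon_2 I$ is $1+\widehat\lambda_i^-+\varepsilon_2$, so the relevant product is
\begin{equation*}
\Bigl(1-\widehat\lambda_i^{+}+\varepsilon_1\Bigr)\Bigl(1+\widehat\lambda_i^{-}+\varepsilon_2\Bigr),
\end{equation*}
not $(\widehat\lambda_i^++\varepsilon_1)(\widehat\lambda_i^-+\varepsilon_2)$. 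This error has two concrete consequences. First, at $i=1$ there is no tie: since $\widehat\lambda_1^{\pm}=1$, the eigenvalue is $\sqrt{\varepsilon_1(2+\varepsilon_2)}$, which is strictly below the bulk under condition~1; your whole ``Perron-type direction, accept the tie'' digression addresses a problem that does not exist (and would not be a valid resolution if it did, since a genuine tie would not let you assert that $\boldsymbol\chi_1$ is among the bottom-$k$ eigenvectors). Second, for $i\ge 2$ your displayed condition has first factor $1-\tfrac{k\qp}{\pp+(k-1)\qp}+\varepsilon_1$ where it should be $\tfrac{k\qp}{\pp+(k-1)\qp}+\varepsilon_1$; the sign error propagates into your final reduction, which ends up needing a \emph{lower} bound of the form $a(1+b)>b$ that hypothesis~2 (an \emph{upper} bound on $a(1+b)$) cannot supply. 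This is exactly the ``algebraic matching'' you flagged as the anticipated obstacle, and it fails.

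The missing idea, beyond fixing the substitution, is the step the paper uses to pass from the exact inequality to the two clean hypotheses: writing $\alpha_i=1-\widehat\lambda_i^+$ and $\beta_i=1+\widehat\lambda_i^-$, one expands
\begin{equation*}
(\alpha_i+\varepsilon_1)(\beta_i+\varepsilon_2)<(1+\varepsilon_1)(1+\varepsilon_2)
\iff
\alpha_i\beta_i+\varepsilon_1\beta_i+\varepsilon_2\alpha_i<1+\varepsilon_1+\varepsilon_2,
\end{equation*}
and then uses $\alpha_i,\beta_i\in[0,2]$ to bound $\varepsilon_1\beta_i+\varepsilon_2\alpha_i\le 2(\varepsilon_1+\varepsilon_2)$, so that $\alpha_i\beta_i+\varepsilon_1+\varepsilon_2<1$ suffices. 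Specializing $\alpha_i\beta_i$ to $0$ for $i=1$ and to $\bigl(\tfrac{k\qp}{\pp+(k-1)\qp}\bigr)\bigl(1+\tfrac{\ppm-\qm}{\ppm+(k-1)\qm}\bigr)$ for $i\ge 2$ yields precisely conditions~1 and~2 (and explains why the theorem is only a sufficient condition). Your attempt to bound $|b|\le 1$ is aimed at the wrong quantity and does not substitute for this step.
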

\begin{proof}
 We use the same notation as in the previous proof. Observing that $\mathcal{L}_{\sym}^+$ and $\mathcal{Q}_\sym^-$ have the same eigenvectors, it follows from Theorem  \ref{thm:AB-mEigenvectors} that
\begin{equation}\label{eq:geometricMean_in_expectationSHIFTED}
 \mathcal L_{GM}= \Sigma\, \sqrt{( I - \widehat{\Lambda}^+ +\varepsilon_1 I)(I + \widehat{\Lambda}^- +\varepsilon_2 I)}\, \Sigma^{-1}
\end{equation}
where ${d^+}\widehat{\Lambda}^+ =  \Lambda^+$, and ${d^-}\widehat{\Lambda}^- = \Lambda^-$. We deduce  that the eigenvalues of $\mathcal L_{GM}$ correspond to eigenvectors in the following way
 \begin{equation*}
\begin{cases}
  \sqrt{\Big(1 - \frac{\lambda_{i}^{+}}{d^{+}} +\varepsilon_1 \Big) \Big(1+\frac{\lambda_{i}^{-}}{d^{-}}+\varepsilon_2  \Big)}   & \text{with eigenvector }\boldsymbol\chi_i, i=1,\dots, k\\
 (1+\varepsilon_1)(1+\varepsilon_2) & \text{corresponding to the remaining eigenvectors}
\end{cases}
\end{equation*}
Thus, eigenvectors $\boldsymbol\chi_i, i=1,\dots, k$ correspond to the smallest eigenvalues if and only if 
\begin{equation}\label{eq:eigenvalueConditionsSHIFTED}
 \Big(1 - \frac{\lambda_{i}^{+}}{d^{+}} +\varepsilon_1 \Big) \Big(1+\frac{\lambda_{i}^{-}}{d^{-}} +\varepsilon_2 \Big) < (1+\varepsilon_1)(1+\varepsilon_2)
\end{equation}
Further, we can see that the previous equation holds if and only if
\begin{equation*}
  \begin{aligned}
 \Big(1 - \frac{\lambda_{i}^{+}}{d^{+}} \Big)  \Big(1+\frac{\lambda_{i}^{-}}{d^{-}} \Big) + \varepsilon_1\Big(1+\frac{\lambda_{i}^{-}}{d^{-}} \Big)+\varepsilon_2\Big(1 - \frac{\lambda_{i}^{+}}{d^{+}} \Big) < 1 + \varepsilon_1 + \varepsilon_2
 \end{aligned}
\end{equation*}
More over, as $\Big(1 - \frac{\lambda_{i}^{+}}{d^{+}} \Big), \Big(1+\frac{\lambda_{i}^{-}}{d^{-}} \Big) \in[0,2]$, we can see that eq.\eqref{eq:eigenvalueConditionsSHIFTED} holds if
\begin{equation*}
  \begin{aligned}
 \Big(1 - \frac{\lambda_{i}^{+}}{d^{+}} \Big)  \Big(1+\frac{\lambda_{i}^{-}}{d^{-}} \Big) + \varepsilon_1 + \varepsilon_2 < 1
 \end{aligned}
\end{equation*}
\\
By eqs.~\eqref{eq:adjacencyMatrix_eigenvalues_eigenvectors} we see that for the constant eigenvector $\boldsymbol\chi_1$ we have
$1 - \frac{\lambda_{1}^{+}}{d^{+}} = 1 - \frac{d^{+}}{d^{+}} = 0$.
%
Thus,
\begin{equation*}
   \Big(1 - \frac{\lambda_{i}^{+}}{d^{+}} \Big)  \Big(1+\frac{\lambda_{i}^{-}}{d^{-}} \Big) + \varepsilon_1 + \varepsilon_2 = \varepsilon_1 + \varepsilon_2  < 1 
\end{equation*}
For eigenvectors $\boldsymbol\chi_i, i=2,\dots, k$ first observe that
\begin{equation*}
\begin{aligned}
  1 - \frac{\lambda_{i}^{+}}{d^{+}} = (d^{+} - \lambda_{i}^{+})/d^{+} &= \Big(d^{+} - \abs{\mathcal{C}}(\pp-\qp)\Big)/d^{+} = \frac{k \qp}{\pp+(k-1)\qp} 
\end{aligned}
\end{equation*}
In the same way we have 
\begin{equation*}
\begin{aligned}
  1 + \frac{\lambda_{i}^{-}}{d^{-}} = 1 + \frac{\ppm - \qm}{\ppm + (k-1)\qm}\, . 
\end{aligned}
\end{equation*}
%
%
Thus, for the eigenvectors $\boldsymbol\chi_i, i=2,\dots, k$ we have the following condition
\begin{equation*}
 \begin{aligned}
   \Big(1 - \frac{\lambda_{i}^{+}}{d^{+}} \Big) & \Big(1+\frac{\lambda_{i}^{-}}{d^{-}} \Big) +\varepsilon_1 +\varepsilon_2< 1 \iff \\ 
   \Big( \frac{k\qp}{\pp+(k-1)\qp} \Big) & \Big(1 + \frac{\ppm - \qm}{\ppm + (k-1)\qm} \Big)+\varepsilon_1+\varepsilon_2 &< 1\, ,
 \end{aligned}
\end{equation*}
This implies in turn that the eigenvectors $\boldsymbol\chi_i, i=1,\dots, k$ correspond to the smallest eigenvalues of $\mathcal L_{GM}$ if the following conditions hold
 \begin{enumerate}[topsep=-3pt]\setlength\itemsep{-3pt}
   \item $\varepsilon_1 + \varepsilon_2 < 1$.
   \item $\left( \frac{k\qp}{\pp+(k-1)\qp} \right) \left(1 + \frac{\ppm - \qm}{\ppm + (k-1)\qm} \right) + \left( \varepsilon_1 + \varepsilon_2 \right)< 1$.
 \end{enumerate}
\end{proof}
}
}
Intuition suggests that a good model should easily identify clusters when $E_+\cap E_-$. However, unlike condition $E_G$, condition $E_\vol\cap E_{\bal}$ is 
not directly satisfied under that regime. Specifically, we have 
\begin{corollary}\label{corollary:proportion_of_cases}
  Assume that $E_+\cap E_-$ holds.  
  Then $\boldsymbol \chi_1, \ldots, \boldsymbol \chi_k$ are eigenvectors  corresponding to the $k$ smallest eigenvalues of $\mathcal{L}_{GM}$. 
  \pedronew{Let} $p(k)$ denote the proportion of cases where $\boldsymbol \chi_1, \ldots, \boldsymbol \chi_k$ are the eigenvectors of the $k$ smallest eigenvalues of $\mathcal{L}_{SN}$ or $\mathcal L_{BN}$, then 
  \old{$p(k)$} $\pedro{p}(k)\leq \pedronew{\frac 1 6 + \frac{2}{3(k-1)}+\frac{1}{(k-1)^2}}$. 
\end{corollary}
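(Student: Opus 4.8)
The proof combines Theorems~\ref{thm:signedLaplacians} and~\ref{thm:geometricMeanLaplacian} with an elementary estimate. The ``proportion of cases'' is read in the natural model in which $\pp,\qp,\ppm,\qm$ are drawn uniformly and independently from $[0,1]$ and one conditions on the reasonable regime $E_+\cap E_-$.

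\emph{Step 1 (the geometric mean).} First I would show $E_+\cap E_-\Rightarrow E_G$. Under $E_+$ we have $\pp>\qp$, hence $\pp+(k-1)\qp>k\qp$, so the first factor $\tfrac{k\qp}{\pp+(k-1)\qp}$ in $E_G$ lies in $[0,1)$; under $E_-$ we have $\ppm<\qm$, so $1+\tfrac{\ppm-\qm}{\ppm+(k-1)\qm}=\tfrac{2\ppm+(k-2)\qm}{\ppm+(k-1)\qm}\in[0,1)$ as well. A product of two numbers in $[0,1)$ is $<1$, so $E_G$ holds and Theorem~\ref{thm:geometricMeanLaplacian} gives the first assertion.

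\emph{Step 2 (reduction for the arithmetic-mean operators).} Adding $\pp-\qp>0$ and $\qm-\ppm>0$ gives $\pp+\qm>\qp+\ppm$, i.e.\ $E_+\cap E_-$ implies $E_\bal$. Hence, by Theorem~\ref{thm:signedLaplacians}, under $E_+\cap E_-$ the vectors $\boldsymbol\chi_1,\dots,\boldsymbol\chi_k$ are the eigenvectors of the $k$ smallest eigenvalues of $\mathcal L_{SN}$ (equivalently of $\mathcal L_{BN}$) if and only if $E_\vol$ holds, so $p(k)=\Pr(E_\vol\mid E_+\cap E_-)$.

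\emph{Step 3 (bounding $\Pr(E_\vol\mid E_+\cap E_-)$).} The key point is that $E_\vol$, written as $(k-1)(\qm-\qp)<\pp-\ppm$, together with $\pp\le1$ and $\ppm\ge0$, forces the one-dimensional gap bound $\qm-\qp<\tfrac{1}{k-1}$ (trivially if $\qm\le\qp$), so $p(k)\le\Pr(\qm-\qp<\tfrac{1}{k-1}\mid E_+\cap E_-)$. Conditioning on $E_+\cap E_-$ makes $(\pp,\qp)$ and $(\ppm,\qm)$ independent and uniform on $\{\qp<\pp\}$ and $\{\ppm<\qm\}$; in particular $\qp$ and $\qm$ are independent with densities $2(1-x)$ and $2x$ on $[0,1]$, the densities of the minimum and the maximum of two independent uniforms. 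Integrating $\Pr(\qm<\qp+\delta)=\min(\qp+\delta,1)^2$ against the density of $\qp$, splitting the range of $\qp$ at $1-\delta$, gives for $\delta\in[0,1]$
\begin{equation*}
\Pr(\qm-\qp<\delta)=\tfrac{1}{6}+\tfrac{2}{3}\delta+\delta^2-\tfrac{2}{3}\delta^3-\tfrac{1}{6}\delta^4 .
\end{equation*}
Dropping the two negative terms and setting $\delta=\tfrac{1}{k-1}$ yields $p(k)\le\tfrac{1}{6}+\tfrac{2}{3(k-1)}+\tfrac{1}{(k-1)^2}$.

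\emph{Main obstacle.} Everything but Step~3 is bookkeeping. The one idea that has to be found is that $E_\vol$ may be weakened to the gap condition $\qm-\qp<1/(k-1)$---which lets the coordinates $\pp,\ppm$ be integrated out for free---combined with the observation that under the conditioning the marginals of $\qp$ and $\qm$ are exactly the order-statistic densities $2(1-x)$ and $2x$. After that the integral is a short computation and the negative higher-order terms are simply discarded to reach the stated closed form.
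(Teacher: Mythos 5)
Your proposal is correct and follows essentially the same route as the paper: the first assertion via $E_+\cap E_-\Rightarrow E_G$, the reduction $E_+\cap E_-\subseteq E_\bal$ so that only $E_\vol$ remains, the key weakening of $E_\vol$ to the gap condition $\qm-\qp<\tfrac{1}{k-1}$, and $\Pr(E_+\cap E_-)=\tfrac14$. The only (immaterial) difference is computational: you marginalize to the order-statistic densities of $\qp$ and $\qm$, obtain the exact probability of the gap event, and drop the negative terms, whereas the paper evaluates the same quantity as an iterated four-fold integral whose uncapped upper limit discards precisely those terms --- both yield the identical bound.
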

\myComment{
\pedronew{
\begin{proof}
The event $E_\vol$ is defined as
\[ E_\vol = \{ (\ppm,\qm,\pp,\qp) \in [0,1]^4 \,|\, \ppm + (k-1) \qm < \pp + (k-1) \qp\}\]
We can rewrite the inequality as
\[ \qm - \qp < \frac{1}{k-1} \Big( \pp - \ppm \Big) < \frac{1}{k-1}.\]
Thus the event $\tilde E_\vol$ defined as
\[ \tilde E_\vol = \{ (\ppm,\qm,\pp,\qp) \in [0,1]^4 \,|\, \qm - \qp <   \frac{1}{k-1} \} ,\]
satisfies $E_\vol \subset \tilde E_\vol$. Then with
\begin{align*}
   E_3 &= E_+ \cap E_- = \{ (\ppm,\qm) \in [0,1]^2 \,|\, \ppm < \qm\} \cap \{ (\pp,\qp) \in [0,1]^2 \,|\,\qp < \pp\}\\
   E_B &= \{(\ppm,\qm,\pp,\qp) \in [0,1]^4 \,|\, \ppm + \qp < \pp +  \qm\}
\end{align*}
we observe $E_3 \subset E_B$. Then
\begin{align*}
 p(k) &= \Pr( E_B \cap E_\vol \,|\, E_3) = \frac{\Pr( E_B \cap E_\vol \cap E_3)}{\Pr(E_3)} = \frac{\Pr( E_\vol \cap E_3)}{\Pr(E_3)}\\
      &\leq \frac{\Pr( \tilde E_\vol \cap E_3)}{\Pr(E_3)}
\end{align*}
Then we get with $(x_1,x_2,x_3,x_4)$ corresponding to $(\pp,\qp,\qm,\ppm)$
\begin{align*}
\Pr( \tilde E_\vol \cap E_3) &\leq \int_0^1 \Big( \int_0^{x_1} \Big( \int_0^{x_2+\frac{1}{k-1}} \Big(\int_0^{x_3} dx_4\Big) dx_3 \Big) dx_2 \Big) dx_1\\
                             &= \int_0^1 \Big( \int_0^{x_1} \Big( \int_0^{x_2+\frac{1}{k-1}} x_3 dx_3 \Big) dx_2 \Big) dx_1\\
                             &= \int_0^1 \Big( \int_0^{x_1} \Big( \frac{1}{2}\Big(x_2+\frac{1}{k-1}\Big)^2 \Big) dx_2 \Big) dx_1\\
                             &= \int_0^1 \Big[\frac{1}{6}\Big(x_1+\frac{1}{k-1}\Big)^3\Big]_0^{x_1} dx_1\\
                             &= \int_0^1 \Big[\frac{x_1^3}{6} + \frac{x_1^2}{2(k-1)} + \frac{x_1}{2(k-1)^2}\Big] dx_1\\
                             &= \Big[ \frac{x_1^4}{24} + \frac{x_1^3}{6(k-1)} + \frac{x_1^2}{4(k-1)^2}\Big]_0^1\\
                             &= \frac{1}{24} + \frac{1}{6(k-1)} + \frac{1}{4(k-1)^2}
\end{align*}
The first inequality comes from the fact that we do not ensure that the integration upper border for $x_3$ is smaller or equal to one. 
Thus with $\Pr(E_3)=\frac{1}{4}$ we get
\[ p(k) \leq \frac{1}{6} + \frac{2}{3(k-1)} + \frac{1}{(k-1)^2}.\]
\end{proof}
}
\old{
\begin{proof}
We easily see that $E_G$ holds when $\ppm<\qm$ and $\qp<\pp$.  Consider the following events
  \begin{gather*}
   E_3=E_+\cap E_-=\{\ppm < \qm\}\cap\{\qp<\pp\}\quad \text{and} \quad \tilde E_\vol=\{ \qm -\qp < 1/(k+1) \}
  \end{gather*}
  Note that $E_3 \subseteq E_\bal$ and $E_\vol\subseteq \tilde E_\vol$. The probability $p(k)$ that $E_\bal$ and $E_\vol$ hold  simultaneously, conditioned to $E_3$  is given by the probability of the event $E_\bal \cap E_\vol$ divided by the probability of $E_3$. We have the following upper bound
  $$p(k) =\mathbb P(E_\bal\cap E_\vol|E_3)=\frac{\mathbb{P}(E_\bal\cap E_\vol)}{\mathbb{P}(E_3)}\leq \frac{\mathbb{P}(E_\bal\cap \tilde E_\vol)}{\mathbb{P}(E_3)}\, .$$
  Therefore
  \begin{align*}
   \mathbb{P}(E_\bal\cap \tilde E_\vol) =\int_{E_\bal\cap \tilde E_\vol} \!\!\!\!\!\!d\x = \int_0^1 \!\!dx_1 \int_0^{x_1} \!\!dx_2 \int_0^{x_2+\frac 1 {k+1}}\!\!dx_3\int_0^{x_3} \!\!dx_4 = \frac 1 {24}+\frac{1}{6(k+1)}+\frac{1}{2(k+1)^2}
  \end{align*}
  and 
  $$\mathbb{P}(E_3)= \int_{E_3}d \x = \left(\int_0^1 dx_1 \int_0^{x_1} dx_2\right)^2=\frac 1 4 $$
  concluding the proof.
\end{proof}
}  
}
In order to grasp the difference in expectation between $L_{BN}$, $L_{SN}$ and $L_{GM}$, 
in Fig~\ref{fig:SBM} we present the proportion of cases where 
Theorems~\ref{thm:signedLaplacians} and~\ref{thm:geometricMeanLaplacian} hold under different contexts.
Experiments are done with all four parameters discretized in $[0,1]$ with 100 steps. The expected proportion of cases where $E_G$ 
holds (Theorem \ref{thm:geometricMeanLaplacian}) is far above the corresponding proportion for $E_\vol \cap E_\bal$ (Theorem \ref{thm:signedLaplacians}), 
showing that in expectation the geometric mean Laplacian is superior to the other signed Laplacians.
\old{
\myComment{
As for the upper bound derived for $p(k)$ in Corollary \ref{corollary:proportion_of_cases}, we can upper bound the probability that $E_\bal$ and $E_\vol$ hold simultaneously, under the other regimes analyzed in Figure \ref{fig:SBM}. We recall that the following inclusion holds, for any $k$, 
$$E_\vol=\{ \ppm+(k-1)\qm < \pp+(k-1)\qp \} \subseteq \{ \qm -\qp < 1/(k+1) \} = \tilde E_\vol\, .$$
Let $E_+$, $E_-$  be defined as above. We have  
\begin{equation*}
 \mathbb{P}(E_\bal\cap \tilde E_\vol) = \int_0^1 dx_1 \int_0^1 dx_2 \int_0^{x_2+\frac 1 {k+1}}dx_3 \int_0^{x_1+x_3-x_2}dx_4  = \frac{1}{8} + \frac{1}{2(k+1)} + \frac{1}{2(k+1)^2}
\end{equation*}
whereas
$$
\mathbb P(E_\bal\cap \tilde E_\vol \cap E_+) = \int_0^1 dx_1 \int_0^{x_1} dx_2 \int_0^{x_2+\frac{1}{k+1}} dx_3 \int_0^{x_1 + x_3-x_2}dx_4 = \frac{1}{12} + \frac{1}{3(k+1)} + \frac{1}{2(k+1)^2}\, .
$$
Since $\mathbb P(E_+) = \mathbb P(E_\bal)=1/2$ and $\mathbb P(E_\bal\cap E_\vol \cap E_+)=\mathbb P(E_\bal\cap  E_\vol \cap E_-)$ we get
$$\mathbb P(E_\bal\cap E_\vol | E_\bal) \leq \frac{1}{4} + \frac{1}{k+1} + \frac{1}{(k+1)^2}$$ 
and, as $E_+$ and $E_-$ are disjoint, 
$$\mathbb P(E_\bal \cap E_\vol | E_-)=\mathbb P(E_\bal \cap E_\vol | E_+)=\mathbb P(E_\bal \cap E_\vol | E_+\cup E_-)\leq \frac{1}{6} + \frac{2}{3(k+1)} + \frac{1}{(k+1)^2}\, .$$
On the other hand note that the probability that the condition $E_G$ holds is always lower bounded by $1/2$. In fact, note that 
$$E_{G} = \left\{ \frac{k\qp }{\pp + (k-1)\qp} < 1 + \frac{\qm - \ppm }{2\ppm + (k-2)\qm}\right\}$$
therefore, letting $E_1$ and $E_2$ be defined by 
\begin{align*}
E_1 = \left\{ \frac{\qm - \ppm }{2\ppm + (k-2)\qm}>0\right\}, \qquad E_2 = \left\{ \frac{k\qp }{\pp + (k-1)\qp} < 1\right\} \, ,
\end{align*}
we have $E_G\cap E_+ \supseteq E_1$ and $E_G\cap E_-\supseteq E_2$. We get 
$$\mathbb P(E_{G}|E_+) = 2\, \mathbb{P}(E_{G}\cap E_+)\geq 2\mathbb P(E_1) = 2\left(\int_0^1 dx_1 \int_0^{x_1} dx_2\right)^2=\frac 1 2 $$
and,  similarly,  $\mathbb P(E_{G}|E_-)=2\, \mathbb{P}(E_{G}\cap E_-)\geq 2 \mathbb P(E_2)=1/2$. \\ \\
}
}
In Fig.~\ref{fig:SBM-Sampled} we present experiments on \pedronew{sampled} graphs with $k$-means on top of the $k$ smallest eigenvectors.
In all cases we consider clusters of size $\abs{\mathcal C} = 100$ and present the median of clustering error \pedro{(i.e., error when clusters are labeled via majority vote)} of $50$ runs. 
The results show that the analysis made in expectation closely resembles the actual behavior. 
In fact, even if we expect only one noisy eigenvector for $L_{BN}$ and $L_{SN}$, 
the use of the geometric mean Laplacian significantly outperforms any other previously 
proposed technique in terms of clustering error. 
\old{The balance normalized 
Laplacian and the signed normalized Laplacian}
\pedro{$L_{SN}$ and $L_{BN}$}
achieve good clustering only when the 
graph resembles a $k$-balanced structure, whereas they fail  
even in the ideal situation where either the positive or the negative graphs 
\pedronew{are informative about the cluster structure.}
As shown in Section \ref{sec:real_world_experiments}, the advantages of $L_{GM}$ over the other Laplacians 
discussed so far allow us to identify  a clustering structure on the Wikipedia benchmark real world signed network, 
where  other clustering approaches have failed.  
\begin{figure}[!t]
\centering
\includegraphics[width=1\columnwidth, clip]{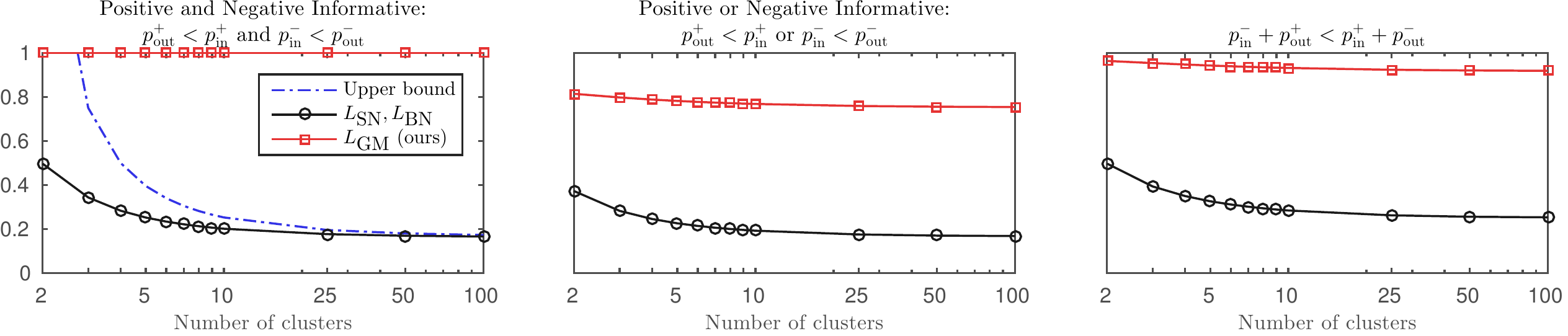}
\caption{Fraction of cases where in expectation $\boldsymbol \chi_1, \ldots, \boldsymbol \chi_k$ 
correspond to the $k$ smallest eigenvalues under the SBM. \nosupplementary{Upper bounds for middle and right plots are discussed in the supplementary material.} 
}
\label{fig:SBM}
\end{figure}
\begin{figure}[!t]
\centering
\includegraphics[width=1\columnwidth, clip]{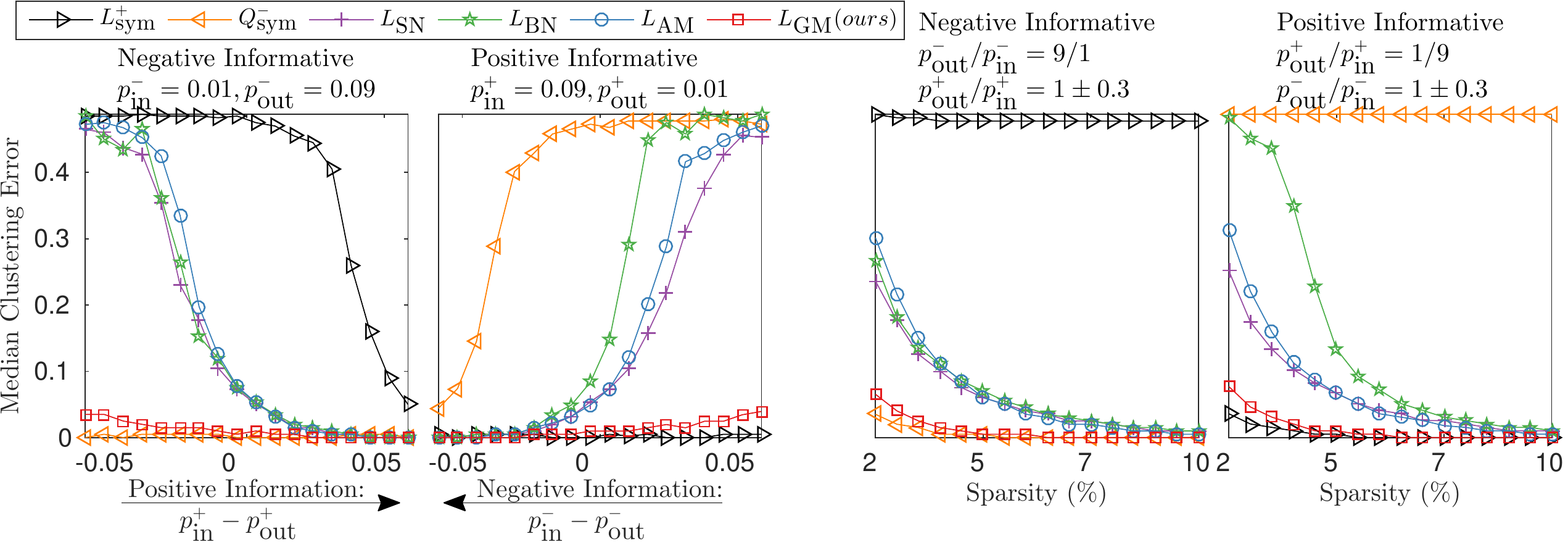}
\vspace*{-5mm}
\caption{Median clustering error under the stochastic block model over 50 runs.\\[-15pt]}
\label{fig:SBM-Sampled}
\end{figure}

\section{Krylov-based inverse power method for small eigenvalues of~$L^+_{\sym}\# Q^-_{\sym}$}\label{sec:IPM}
 
 The computation of the geometric mean $A\# B$ of two positive definite matrices of moderate size has been 
 discussed extensively by various authors \cite{raissouli:continued_fractions,higham:sign_function,Ianazzo:2012:geometricMean, Iannazzo:optimization}. 
 However, when $A$ and $B$ have large dimensions, the approaches  proposed so far become unfeasible, in fact $A\#B$ is in general a full matrix even if $A$ and $B$ are sparse.  
 In this section we present a scalable algorithm for the computation of the smallest eigenvectors of $L^+_{\sym}\# Q^-_{\sym}$. 
 The method is discussed for a general pair of matrices $A$ and $B$, to emphasize its general applicability  
 which is therefore interesting in itself. We remark that the method takes advantage of the sparsity of $A$ and $B$ 
 and does not require to explicitly compute the matrix $A\# B$. To our knowledge this is the first effective 
 method explicitly built for the computation of the eigenvectors of the geometric mean of two large and sparse positive definite matrices. 

Given a positive definite matrix $M$ with eigenvalues $\lambda_1\leq \dots\leq \lambda_n$, let $\mathcal H$ be any
eigenspace of $M$ associated to $\lambda_1, \dots, \lambda_t$.
 The inverse power method (\textbf{IPM}) applied to $M$ is a  method that 
 converges to an eigenvector $\x$ associated to the smallest eigenvalue $\lambda_{\mathcal H}$ of $M$ such that 
 $\lambda_{\mathcal H} \neq \lambda_i$, $i=1, \dots, t$.  
 The pseudocode of IPM applied to $A\# B = A(A^{-1}B)^{1/2}$ is shown in Algorithm~\ref{alg:IPM}.
 Given a vector $\v$ and a matrix $M$, the notation $\mathrm{solve}\{M,\v\}$ is used to denote a procedure returning the solution $\x$ of the linear system $M\x=\v$. 
 At each step the algorithm requires the solution of two linear systems. 
 The first one (line $2$) is solved by the preconditioned conjugate gradient method, where the preconditioner is obtained by the incomplete Cholesky decomposition of $A$.   
 Note that the conjugate gradient method is very fast, as $A$ is assumed sparse and positive definite, 
 and it is matrix-free, \textit{i.e.}\ it requires to compute the action of $A$ on a vector, whereas it does not require the knowledge of $A$ (nor its inverse). 
 The solution of the linear system occurring in line $3$ is the major inner-problem of the proposed algorithm.  
 Its efficient solution is performed by means of an extended Krylov subspace technique that we describe in the next  section. 
 The proposed implementation ensures the whole IPM is matrix-free and scalable. 

 \subsection{Extended Krylov subspace method for the solution of the linear system $(A^{-1}B)^{1/2}\x=\y$}
  We discuss here how to apply the technique known as Extended Krylov Subspace  Method ({\bf EKSM}) for 
  the solution of the linear system $(A^{-1}B)^{1/2}\x=\y$. Let $M$ be a large and sparse matrix, 
  and $\y$ a given vector.  When $f$ is a function with a single pole, 
  EKSM is a very effective method to approximate the vector $f(M)\y$\old{,} 
  without ever computing the matrix $f(M)$ \cite{druskin:1998:extendedKrylov}. 
  Note that, given two positive definite matrices $A$ and $B$ and a vector $\y$, 
  the vector we want to compute is $\x = (A^{-1}B)^{-1/2}\y$, so that our problem 
  boils down  to the  computation of the product $f(M)\y$, where $M = A^{-1}B$ and $f(X)=X^{-1/2}$.  
 The general idea of EKSM $s$-th iteration is to project $M$ onto  the subspace 
$$\mathbb K^s(M,\y) = \mathrm{span}\{\y, M\y, M^{-1}\y, \dots, M^{s-1}\y, M^{1-s}\y\}\, ,$$
and solve the problem there. The projection onto $\mathbb K^s(M,\y)$ is realized by means of the Lanczos process, 
which produces a sequence of matrices $V_s$ with orthogonal columns,  
such  that the first column of $V_s$ is a multiple of $\y$ and $\mathrm{range}(V_s)=\mathbb K^s(M,\y)$. 
Moreover at each step we have
\begin{equation}\label{eq:lanczos}
 MV_s = V_s H_s + [\u_{s+1},\v_{s+1}][\e_{2s+1}, \e_{2s+2}]^T
\end{equation}
where $H_s$ is $2s\times 2s$ symmetric tridiagonal, 
$\u_{s+1}$ and $\v_{s+1}$ are orthogonal to $V_s$, and $\e_i$ is the $i$-th canonical vector. 
The solution $\x$ is then approximated by 
$\x_s= V_s f(H_s)\e_1\|\y\|\approx f(M)\y$. 
If $n$ is the order of $M$, then the exact solution is obtained after at most $n$ steps. 
However, in practice, significantly fewer iterations are enough to achieve a good approximation, as the error $\|\x_s-\x\|$ decays exponentially with $s$ \old{\cite{simoncini:2008:error_decay}} \pedro{(Thm 3.4 and Prop. 3.6 in \cite{simoncini:2008:error_decay})}. \nosupplementary{See the supplementary material for details.}  
      
The pseudocode for the extended Krylov iteration is presented in Algorithm \ref{alg:kyrlov}. 
We use the stopping criterion proposed in \cite{simoncini:2008:error_decay}.
It is worth pointing out that  at step $4$ of the algorithm we can freely choose any scalar product $\left<\cdot ,\cdot \right>$, without affecting formula \eqref{eq:lanczos} nor the   convergence properties of the method. As $M=A^{-1}B$, we use the scalar product $\left<\u ,\v \right>_A=\u^TA\v$ induced by the positive definite matrix $A$, so that the computation of the tridiagonal matrix $H_s$  
in the algorithm simplifies to $V_s^T B V_s$. We refer to  \cite{fasi:2016:computing} for further details.  As before, the $\mathrm{solve}$ procedure  is implemented by means of the preconditioned conjugate gradient method, where the preconditioner is obtained by the incomplete Cholesky decomposition of the coefficient matrix.
Figure \ref{fig:timeComparison} shows that  we are able to compute the smallest eigenvector of $L_\sym^+\# Q_\sym^-$ being just a constant factor worse than the computation of the eigenvector of the arithmetic mean, whereas the direct computation of the geometric mean followed by the computation of the eigenvectors is unfeasible for large graphs.
\hspace*{2mm}
\RestyleAlgo{plain}
\begin{minipage}{0.95\linewidth}
 \rule{\linewidth}{1pt}
      \begin{minipage}{0.38\linewidth}
      \vspace{-42pt}
          \begin{algorithm}[H]
          \DontPrintSemicolon
		\caption{\footnotesize{IPM applied to $A\# B.^{\textcolor{white}{1/2}}$}}\label{alg:IPM}
		{\footnotesize
		\KwIn{$\x_0$,  eigenspace $\mathcal H$ of $A\# B$.}
		\KwOut{Eigenpair  $(\lambda_{\mathcal H}, \x)$ of $A\# B$}
		\Repeat{tolerance reached}{
			$\u_k$ $\gets$ $\mathrm{solve}\{A, \x_k\}$ \;
			$\v_k$ $\gets$ $\mathrm{solve}\{(A^{-1}B)^{1/2}, \u_k\}$ \;
			$\y_k$ $\gets$ project $\u_k$ over $\mathcal H^\bot$ \;
			$\x_{k+1}$ $\gets$ $\y_k / \|\y_k \|_2$ \;}
		$\lambda_{\mathcal H}$ $\gets$ $\x_{k+1}^T \x_k$,\quad  $\x$ $\gets$ $\x_{k+1}$\;
		}
		\end{algorithm}
      \end{minipage}
      \hspace{0.01\linewidth}
      \begin{minipage}{0.545\linewidth}
          \begin{algorithm}[H]
          \DontPrintSemicolon
		\caption{\footnotesize{EKSM for the computation of $(A^{-1}B)^{-1/2}\y$}}\label{alg:kyrlov}
		{\footnotesize
		\KwIn{$\u_0 = \y$, $V_0 = [\,\cdot\,  ]$ }
		\KwOut{$\x=(A^{-1}B)^{-1/2}\y$}
		$\v_0$ $\gets$ $\mathrm{solve}\{B,A\u_0\}$\;
		\For{$s=0,1,2,\dots,n$}{
			$\tilde V_{s+1}$ $\gets$ $[V_s,\u_{s}, \v_{s}]$\;
			$V_{s+1}$ $\gets$ Orthogonalize columns of $\tilde V_{s+1}$ w.r.t. $\left <\cdot ,\cdot \right>_A$\;
			$H_{s+1}$ $\gets$ $V_{s+1}^T BV_{s+1}$\;
			$\x_{s+1}$ $\gets$ $H_{s+1}^{-1/2}\e_1$\;
			\textbf{if} {\it tolerance reached} \textbf{then} {\it break} \;
			$\u_{s+1}$ $\gets$ $\mathrm{solve}\{A,BV_{s+1}\e_1\}$\;
			$\v_{s+1}$ $\gets$  $\mathrm{solve}\{B,AV_{s+1}\e_2\}$	\;	
		}
		$\x$ $\gets$ $V_{s+1}\x_{s+1}$\;
		}
		\end{algorithm}
      \end{minipage}
\rule{\linewidth}{1pt}
\end{minipage}
\begin{figure}[h]
\floatbox[{\capbeside\thisfloatsetup{capbesideposition={right,top},capbesidewidth=.62\textwidth}}]{figure}[\FBwidth]%
{\caption{Median execution time of 10 runs for different Laplacians. 
Graphs have  two perfect clusters and $2.5\%$ of edges among nodes. 
$L_{GM}(ours)$ uses Algs \ref{alg:IPM} and \ref{alg:kyrlov}, whereas we used Matlab's \texttt{eigs} for the other matrices. 
The use of \texttt{eigs} on $L_{GM}$ is prohibitive as it needs the matrix $L_{GM}$ to be built (we use the toolbox provided in~\cite{matrixMeanToolbox}), 
destroying the sparsity of the original graphs. Experiments are performed using one thread.}
    \label{fig:timeComparison}}{\includegraphics[width=.35\textwidth]{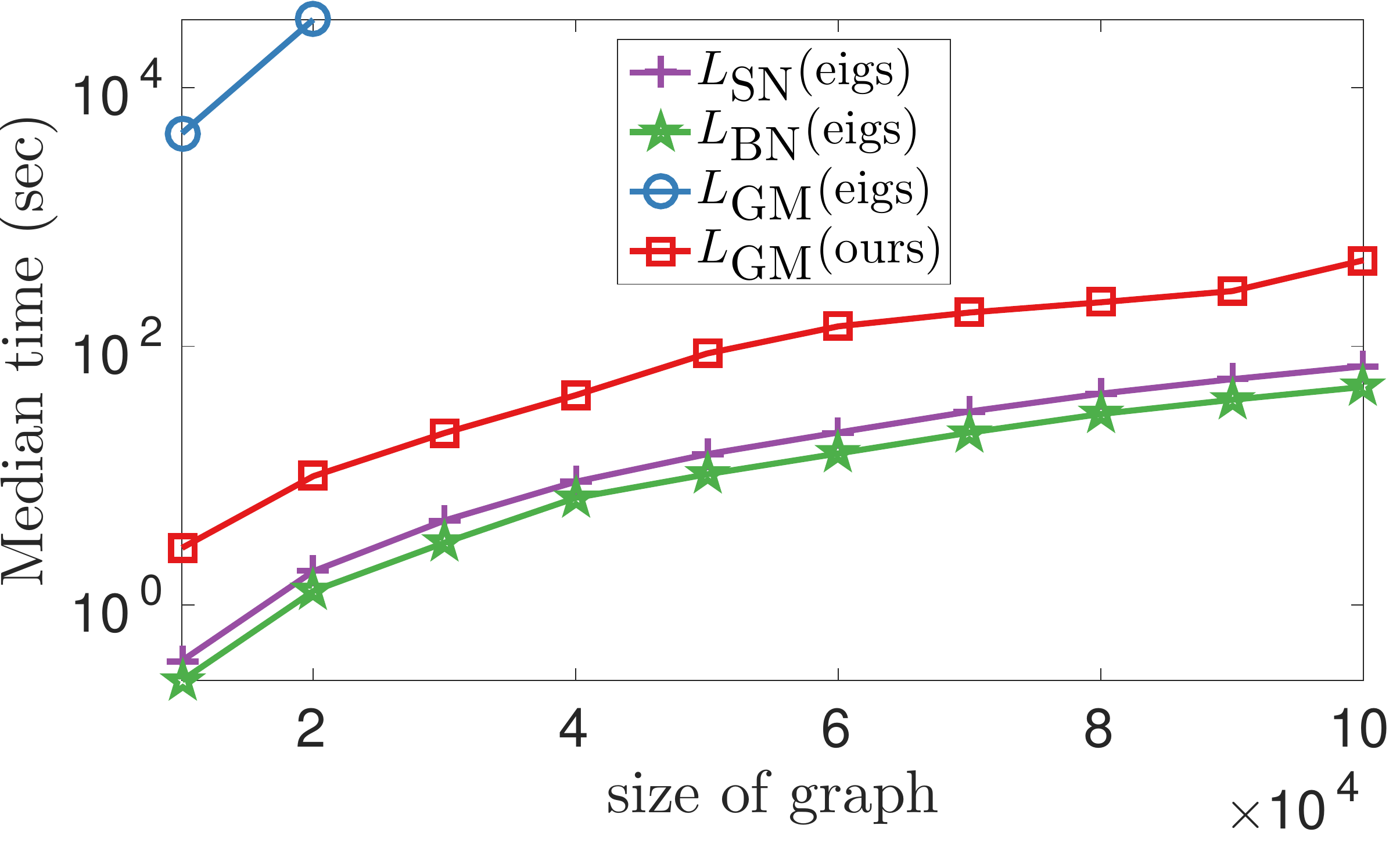}}
\end{figure}

\myComment{%
\subsection{On the computational cost of the method}
Let $c(n)$ denote the computational cost to compute the solution of a linear system with coefficient matrix either $L_{\sym}^+$ or $Q_{\sym}^-$. 
Standard iterative techniques allows to compute the smallest eigenvector of $L_{\sym}^+$ or $Q_{\sym}^-$ at a cost of $O(c(n))$ operations per step. 
\pedronew{We} show that the use of Algorithms \ref{alg:IPM} and \ref{alg:kyrlov} allows to compute the eigenvectors of $L_{\sym}^+\# Q_{\sym}^-$ with the same order of operations. 

First of all it is important to realize that the matrix $H_s = V_s^T BV_s$, defined in line $5$, 
can be defined iteratively and does not require any additional matrix multiplication \cite{simoncini:2008:error_decay}. 
Thus the cost of each iteration of  Algorithm \ref{alg:kyrlov} \pedronew{dominated} by lines $8$ and $9$, and requires $O(2c(n))$ ops. 
The algorithm converges exponentially, namely if $[a, b]$ is any interval containing the eigenvalues of $A^{-1}B$, then $\|\x_s - \x\|=O(\exp(-2s\sqrt[4]{a/b}))$, where $\x = (A^{-1}B)^{-1/2}\y$. See f.i.\ \cite{simoncini:2008:error_decay} for details. Thus $O(s_\varepsilon)$ iterations are enough to reach the prescribed tolerance  $\varepsilon >0$, where $s_\varepsilon = |\log \varepsilon/2\sqrt[4]{a/b}|$. However it is worth pointing out that in practice, at least for the matrices considered in this work, much less iterations than $O(s_\varepsilon)$ are enough. 
Therefore the proposed IPM technique allows  to compute the smallest eigenvector of $L_{\sym}^+\# Q_{\sym}^-$ at a cost of $O(c(n))+O(2 s_\varepsilon c(n))$ operations per step. 
This shows that the method is scalable. A final important remark concerns step $6$. 
The matrix $H_s$ is tridiagonal of size $2s\times 2s$, thus the function $H_s^{-1/2}$ can be implemented directly using a method for dense matrices, without any \pedronew{notable} change to the overall algorithm cost.

Next Figure \ref{fig:timeComparison} shows that, despite the computationally ugly definition of $L_\sym^+\# Q_\sym^-$,  we are able to compute its smallest eigenvector with a constant factor overcome, whereas the naive \pedronew{direct} computation would be extremely prohibitive or unfeasible.

In Fig.~\ref{fig:timeComparison} we show the median execution time for the computation of the smallest eigenvector
of the signed ratio/normalized cut Laplacians , the balance ratio/normalized cut Laplacians and the geometric mean $L^{+}_{\sym} \# Q^{-}_{\sym}$.
We randomly generate graphs with a sparsity of $2.5\%$ under the perfect stochastic case, \old{i.e.}\pedro{\textit{i.e.}} $\pp=\qm=1$ and $\qp=\ppm=0$, where 
the size of graphs goes from $10,000$ to $100,000$ in steps of $10,000$. For each setting we report the median execution time out of 10 runs. 
Experiments are performed using one thread.

For the computation of the smallest eigenvector of the signed ratio/normalized cut Laplacians and the balance ratio/normalized cut Laplacians
we compute the Laplacian matrix (\old{i.e.}\pedro{\textit{i.e.}} $L_{SR}$, $L_{SN}$, $L_{BR}$ and $L_{BN}$) and use the function \texttt{eigs} from Matlab.
For the computation of the smallest eigenvector of the geometric mean we consider two approaches:
one approach is based on the computation of the geometric mean $L_{\sym}^+\#Q_{\sym}^-$ using the Matlab toolbox provided by \cite{Ianazzo:2012:geometricMean} and then the
use of the function \texttt{eigs} from Matlab (in Fig.~\ref{fig:timeComparison} denoted as $L_{GM}$(eigs)).
The second approach is based on the Inverse Power Method of Algorithm~\ref{alg:IPM} together with the extended Krylov method 
of Algorithm~\ref{alg:kyrlov} (in Fig.~\ref{fig:timeComparison} denoted as $L_{GM}$(ours)).

We can see that the execution time of for signed Laplacians is rather similar.
One can observe that the execution time for the geometric mean with Matlab's \texttt{eigs} is truncated for graphs that have more than 20,000 nodes. This happens as the computation of the geometric mean does not fit into memory.
On the other side, the time execution for the geometric mean with the Inverse Power Method and extended Krylov methods (Algorithms~\ref{alg:IPM} and~\ref{alg:kyrlov})
  is comparable with the one of the signed Laplacians that use \texttt{eigs}. In particular it is noticeable that the  
time executions  differs just by a constant factor.
}

\section{Experiments}\label{sec:real_world_experiments}

\myComment{
\pedro{
{\bf Sociology Networks} We evaluate signed Laplacians $L_{SN}, L_{BN}, L_{AM}$ and $L_{GM}$ through three 
real-world and moderate size signed networks:
Highland tribes (Gahuku-Gama) network \cite{Read:1954:Cultures},
Slovene Parliamentary Parties Network \cite{Kropipvnik:1996:slovene} and
US Supreme Court Justices Network \cite{Doreian:2009:Partitioning}. 
For the sake of comparison we take as ground truth the clustering that is stated in the corresponding references.
We observe that all signed Laplacians yield zero clustering error.
}
}

{\bf Experiments on Wikipedia signed network.}
We consider the Wikipedia adminship election dataset from~\cite{snapnets}, 
which describes relationships that are positive, negative or non existent.
We use Algs.~\ref{alg:geometricMeanClusteringSignedNetworks}$-$\ref{alg:kyrlov} and look for 30 clusters. 
Positive and negative adjacency matrices sorted according to our clustering are depicted in Figs.~\ref{fig:wikipediaWposAll} and \ref{fig:wikipediaWnegAll}.
We can observe the presence of a large relatively empty cluster. Zooming into the denser portion of the graph 
we can see a  $k$-balanced behavior (see Figs.~\ref{fig:wikipediaWposZoom} and ~\ref{fig:wikipediaWnegZoom}), \textit{i.e.}\
the positive adjacency matrix shows assortative groups - resembling a block diagonal structure - 
 while the negative adjacency matrix shows a disassortative setting. 
Using $L_{AM}$ and $L_{BN}$ we were not able to find any clustering structure, which corroborates results reported in~\cite{Chiang:2012:Scalable}.  
This further confirms that $L_{GM}$ overcomes other clustering approaches. To the knowledge of the authors, this is the first time that
clustering structure has been found in this dataset.
\begin{figure}[!h]
\centering     
\subfigure[$W^+$]{\label{fig:wikipediaWposAll}       \includegraphics[width=0.25\linewidth,trim = 30mm 60mm 00mm 55mm, clip]{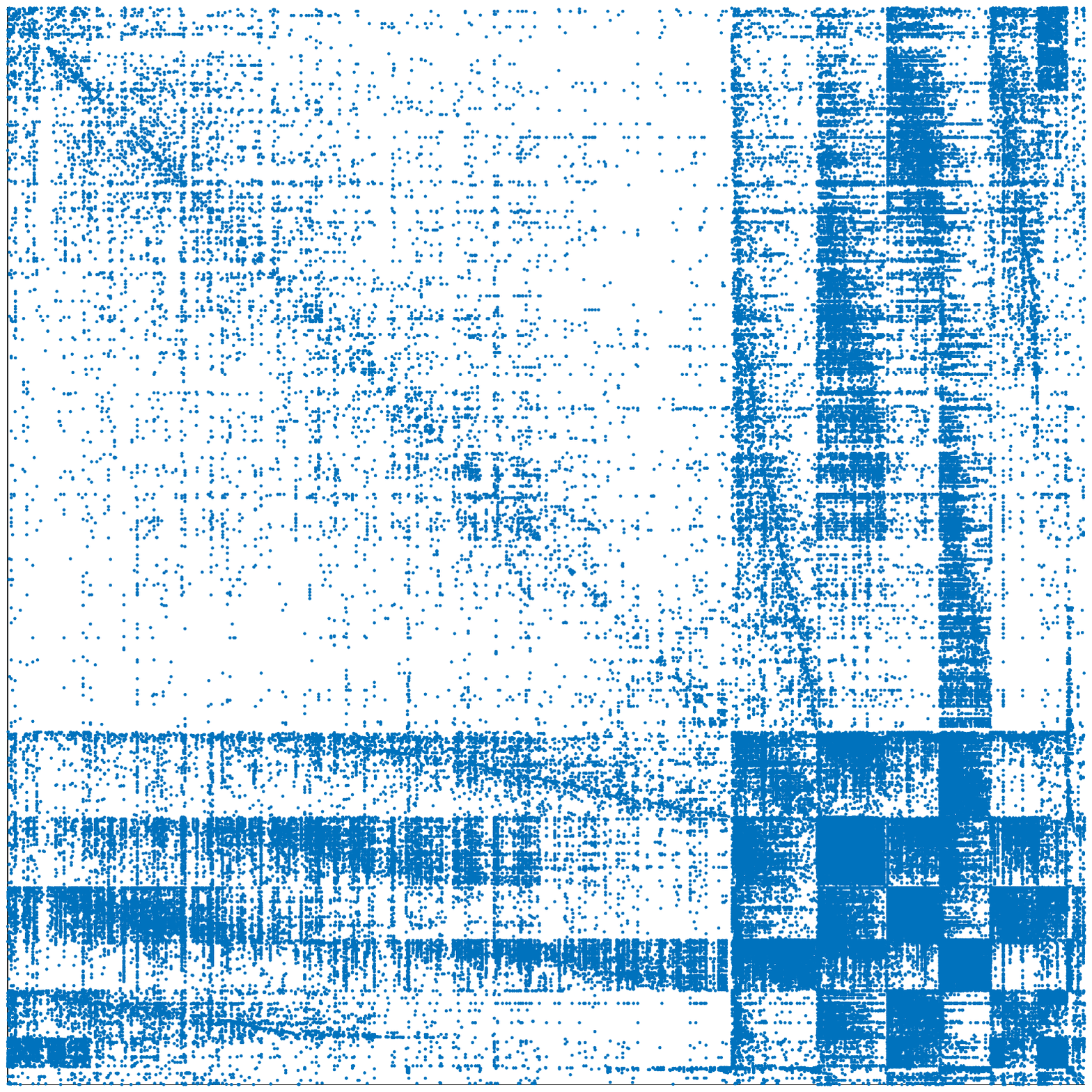}}\hspace*{\fill}
\subfigure[$W^-$]{\label{fig:wikipediaWnegAll}       \includegraphics[width=0.25\linewidth,trim = 30mm 60mm 00mm 55mm, clip]{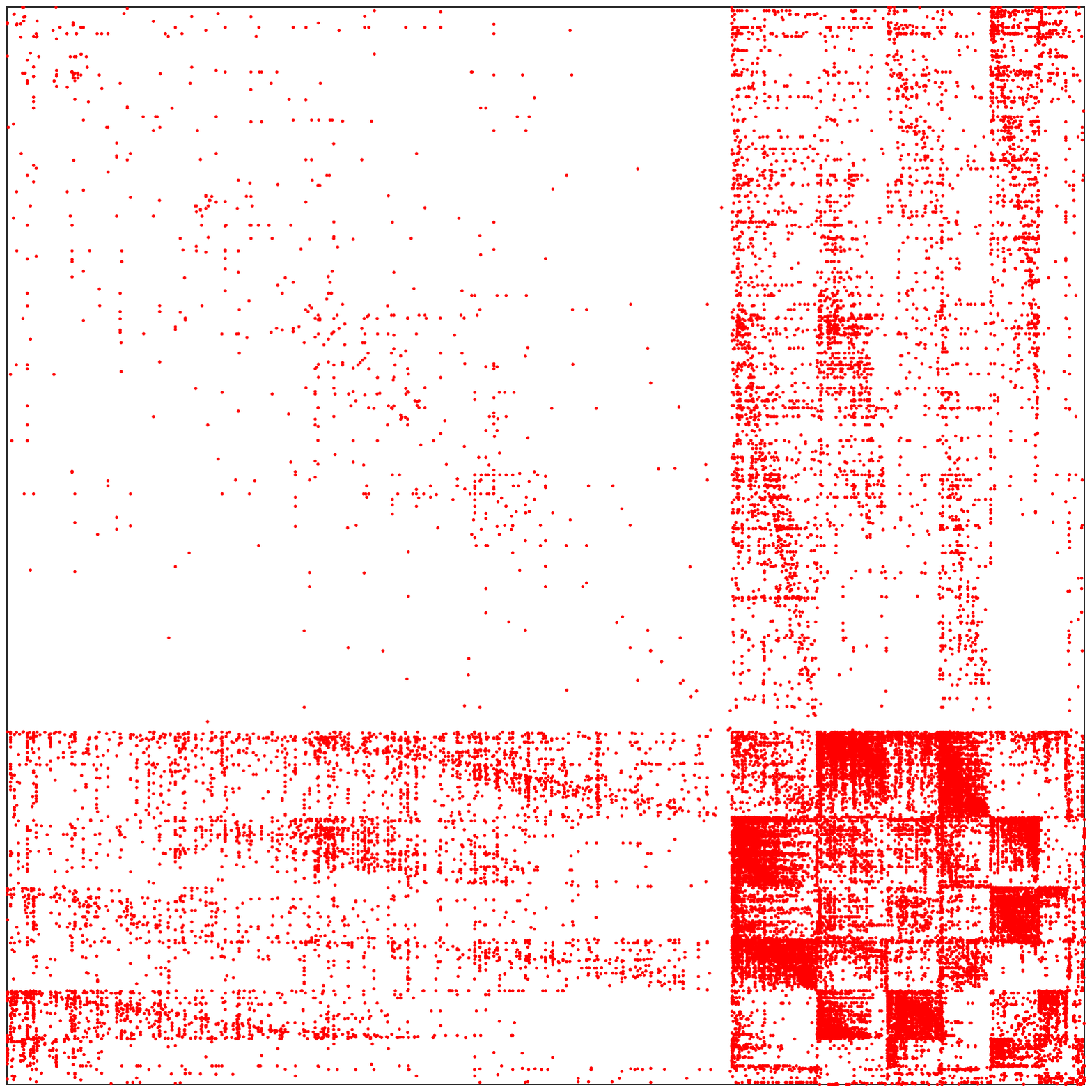}}\hspace*{\fill}
\subfigure[$W^+$(Zoom)]{\label{fig:wikipediaWposZoom}\includegraphics[width=0.25\linewidth,trim = 30mm 60mm 00mm 55mm, clip]{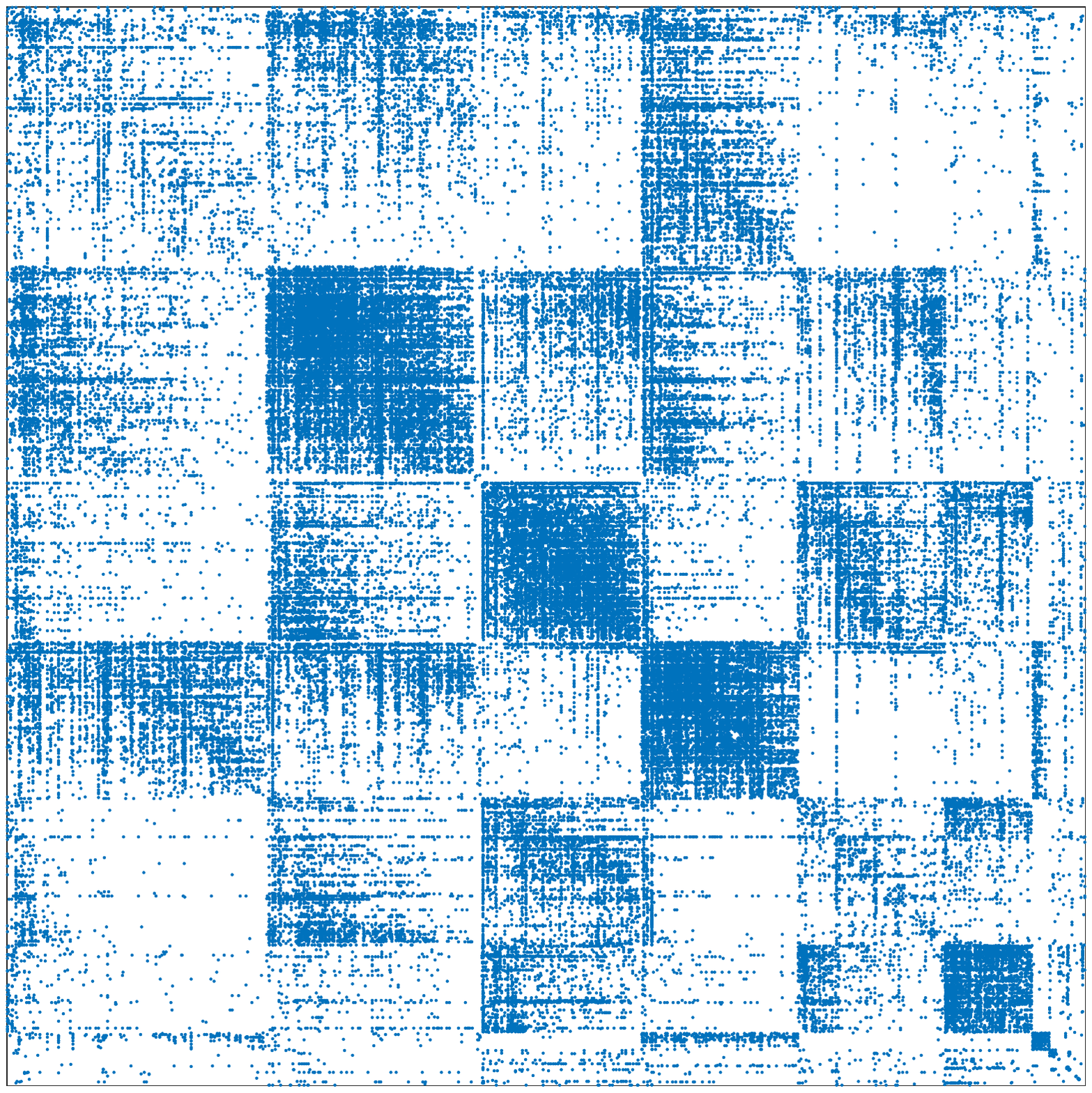}}\hspace*{\fill}
\subfigure[$W^-$(Zoom)]{\label{fig:wikipediaWnegZoom}\includegraphics[width=0.25\linewidth,trim = 30mm 60mm 00mm 55mm, clip]{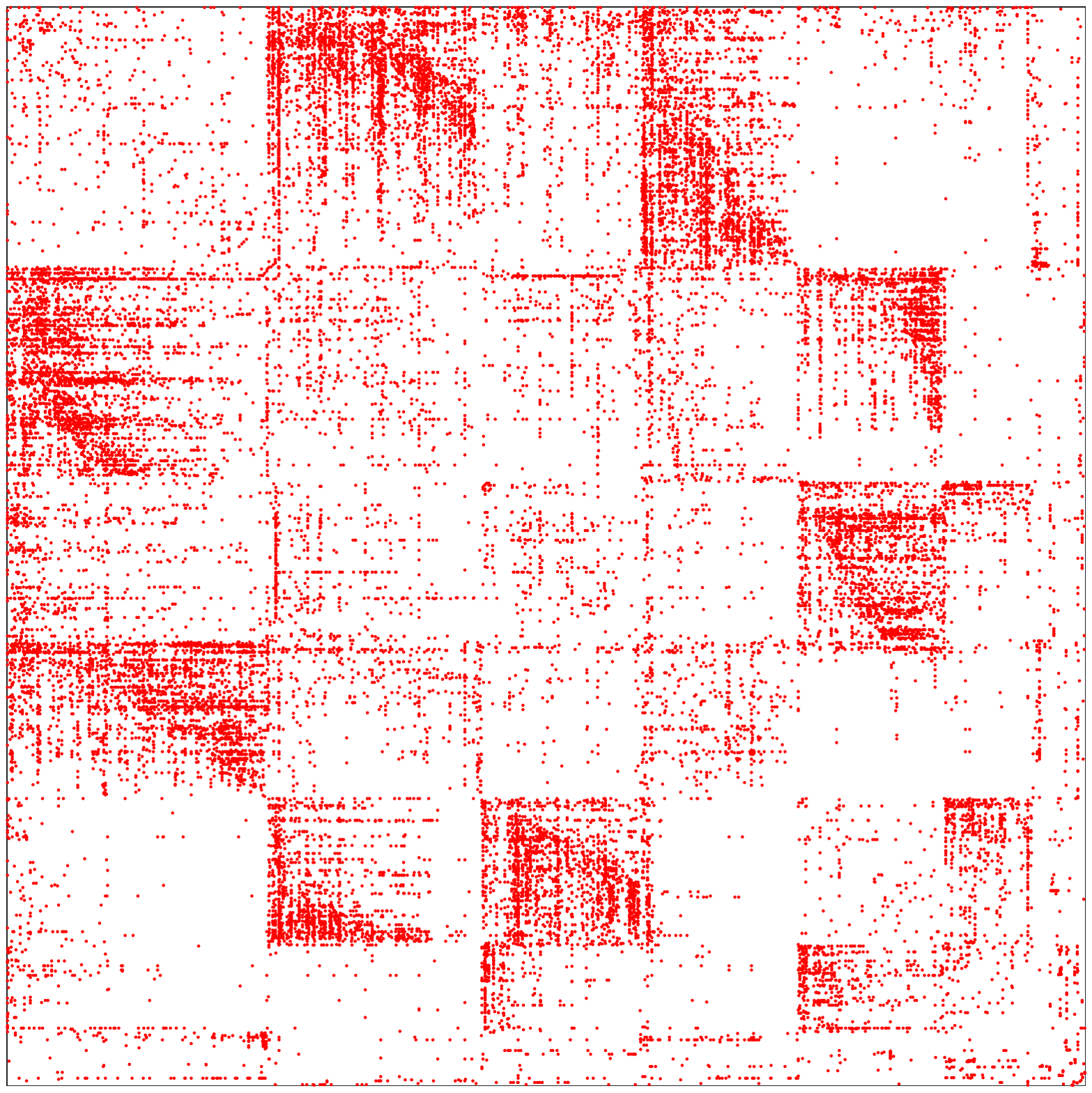}}
\caption{\vspace{-10pt}Wikipedia weight matrices sorted according to the clustering obtained with $L_{GM}$ (Alg.\ \ref{alg:geometricMeanClusteringSignedNetworks}).}
\label{fig:wikipedia}
\end{figure}

{\bf Experiments on UCI datasets.} We evaluate our method $L_{GM}$ (Algs.~\ref{alg:geometricMeanClusteringSignedNetworks}$-$\ref{alg:kyrlov}) against $L_{SN}$, $L_{BN}$, and $L_{AM}$ with  datasets from \pedronew{the} UCI repository (see Table.~\ref{table:UCIexperiments}). We build $W^+$ from a symmetric $k^+$-nearest neighbor graph, whereas $W^-$ is obtained from the symmetric $k^-$-farthest neighbor graph. For each dataset we test all clustering methods over all possible choices of $k^+, k^- \in \{3,5,7,10,15,20,40,60 \}$.
In Table~\ref{table:UCIexperiments} we report the fraction of cases where each method achieves the best and strictly best clustering error over all the 64 graphs, per each dataset. We can see that our method  outperforms other methods across all datasets.

\begin{table}[H]
\newcolumntype{"}{@{\hskip\tabcolsep\vrule width 1pt\hskip\tabcolsep}}
\begin{minipage}{\linewidth}
\begin{minipage}{.7\linewidth}
\setlength\extrarowheight{1pt}
\setlength{\tabcolsep}{3pt}
\footnotesize
\centering        
\begin{tabular}{ccccccccc}   
 \specialrule{1.5pt}{.1pt}{2pt} 
            & & \textbf{iris} & \textbf{wine} & \textbf{ecoli} & \textbf{optdig} & \textbf{USPS} & \textbf{pendig} & \textbf{MNIST} \\
\specialrule{1pt}{.1pt}{0pt} 
\footnotesize{\# vertices} & & \small{150}   & \small{178}   & \small{310}        & \small{5620}       & \small{9298}  & \small{10992}      & \small{70000}\\            
\footnotesize{\# classes}  & & \small{3}     & \small{3}     & \small{3}          & \small{10}         & \small{10}    & \small{10}         & \small{10} \\ 
\specialrule{1pt}{-1pt}{1pt} 
\multirow{2}{*}{$L_{SN}$} & \footnotesize{Best (\%)} & 23.4 & 40.6 & 18.8 & 28.1 & 10.9 & 10.9 & 12.5\\   
 & \footnotesize{Str.\ best (\%)}                 & 10.9 & 21.9 & 14.1 & 28.1 & 9.4  & 10.9 & 12.5\\   
\specialrule{.1pt}{.1pt}{.1pt}                                                                        
\multirow{2}{*}{$L_{BN}$} & \footnotesize{Best (\%)} & 17.2 & 21.9 & 7.8 & 0.0 & 1.6 & 3.1 & 0.0\\
 & \footnotesize{Str.\ best (\%)}                 & 7.8  & 4.7  & 6.3  & 0.0 & 1.6 & 3.1 & 0.0\\  
\specialrule{.1pt}{.1pt}{.1pt} 
\multirow{2}{*}{$L_{AM}$} & \footnotesize{Best (\%)} & 12.5 & 28.1 & 14.1 & 0.0 & 0.0 & 1.6 & 0.0\\   
 & \footnotesize{Str.\ best (\%)}                 & 10.9 & 14.1 & 12.5 & 0.0 & 0.0 & 1.6 & 0.0\\
\specialrule{.1pt}{.1pt}{.1pt} 
\multirow{2}{*}{$\boldsymbol{L_{GM}}$} & \footnotesize{Best (\%)} & \textbf{59.4} & \textbf{42.2} & \textbf{65.6} & \textbf{71.9} & \textbf{89.1} & \textbf{84.4} & \textbf{87.5} \\
 & \footnotesize{Str.\ best (\%)}      & \textbf{57.8} & \textbf{35.9} & \textbf{60.9} & \textbf{71.9} & \textbf{87.5} & \textbf{84.4} & \textbf{87.5}\\
 \specialrule{1.5pt}{.1pt}{.1pt} 
\end{tabular}
\end{minipage}
\hspace{0.04\linewidth}
\begin{minipage}{.25\linewidth}
\flushright
\tiny
$\textcolor{white}{a}$ \hfill MNIST, $k^+=10$ \hfill $\,$  
	\includegraphics[width=1\linewidth,trim = 00mm 00mm 0mm 00mm, clip]{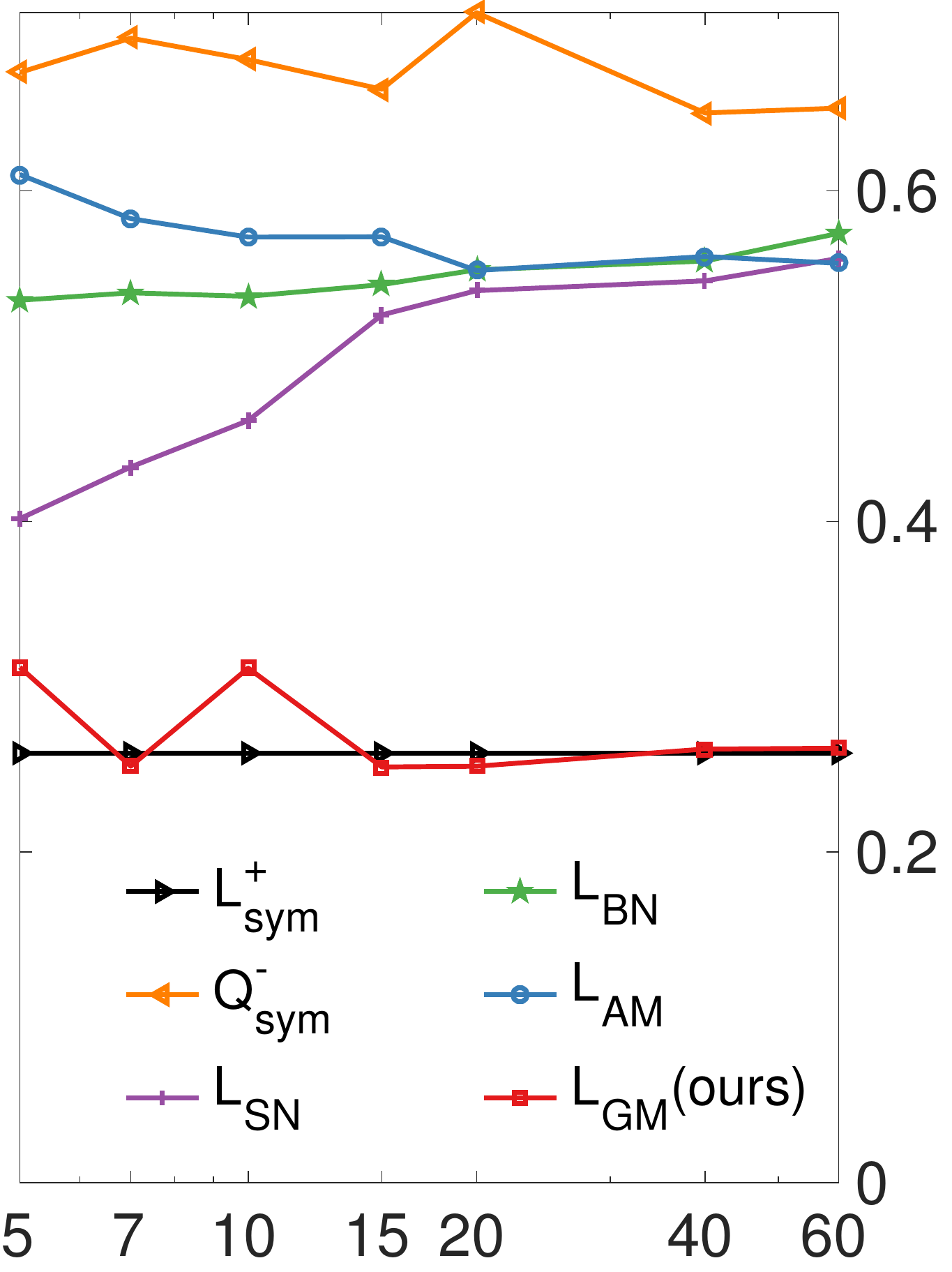}
	 $\textcolor{white}{a}$ \hfill $k^-$ \hfill $\,$
\end{minipage}
\end{minipage}
\vspace{-10pt}
\caption{Experiments on UCI datasets. Left: fraction of cases where methods achieve best and strictly best clustering error. Right: clustering error on MNIST dataset.}
\label{table:UCIexperiments}
\end{table}

In the figure on the right of Table~\ref{table:UCIexperiments} we present the clustering error on MNIST dataset fixing $k^+=10$. 
\old{It is interesting to compare this behavior with the one observed in the SBM.}
\pedronew{With} $Q^-_{\sym}$ \pedronew{one gets} the highest clustering error, which shows that the $k^-$-farthest neighbor graph is a source of noise and 
is not informative. In fact, we observe that a small subset of nodes is the farthest neighborhood of a large fraction of nodes. 
The noise from the $k^-$-farthest neighbor graph is  strongly influencing the performances of $L_{SN}$ and $L_{BN}$, leading to a noisy embedding of the datapoints and in turn to a high clustering error. 
On the other hand we can see that $L_{GM}$ is robust, in the sense that its clustering performances are not affected negatively by the noise \pedronew{in the negative edges}.  Similar behaviors have been observed for the other datasets in Table \ref{table:UCIexperiments}\nosupplementary{, and are shown in supplementary material}. 


\myComment{
\begin{figure}[h]
  \includegraphics[angle=0,width=1\linewidth,clip]{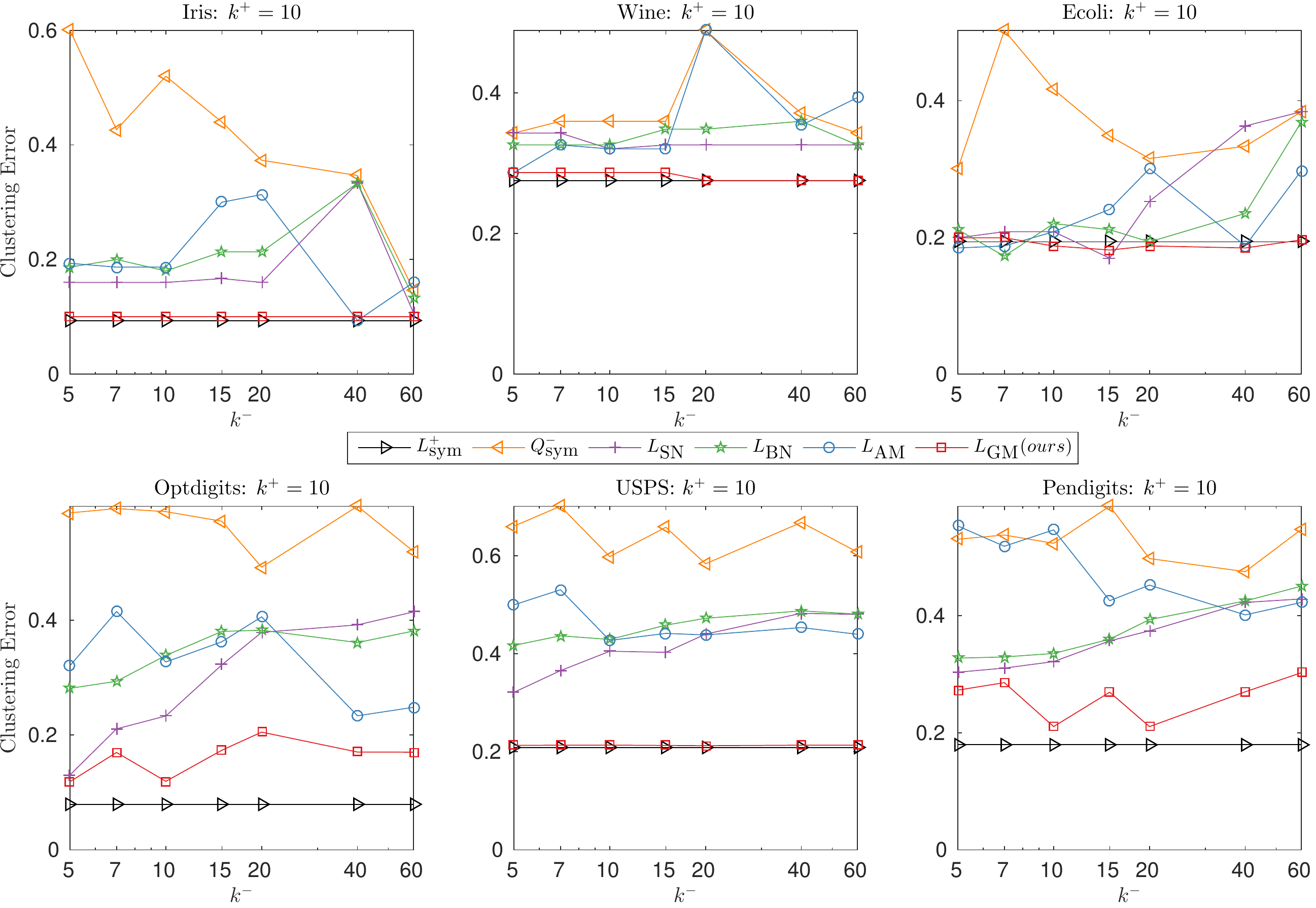}
 \caption{Clustering error on UCI datasets, for $k^+=10$.}
 \label{fig:knnPosParameter10}
\end{figure}
}

\pedro{{\bf Acknowledgments.} The authors acknowledge support by the ERC starting grant NOLEPRO}

\bibliography{ClusteringSignedNetworksWithTheGeometricMeanOfLaplacians}
\bibliographystyle{abbrv}

\end{document}